\theoremstyle{plain}
\newtheorem{theorem}{Theorem}[section]
\theoremstyle{definition}
\theoremstyle{remark}
\icmltitlerunning{PDO-s3DCNNs: Partial Differential Operator Based Steerable 3D CNNs}
\begin{document}

\twocolumn[
\icmltitle{PDO-s3DCNNs: Partial Differential Operator Based Steerable 3D CNNs}



\icmlsetsymbol{equal}{*}

\begin{icmlauthorlist}
\icmlauthor{Zhengyang Shen}{math,byte}
\icmlauthor{Tao Hong}{math}
\icmlauthor{Qi She}{byte}
\icmlauthor{Jinwen Ma}{math}
\icmlauthor{Zhouchen Lin}{key,ai,pazhou}

\end{icmlauthorlist}
\icmlaffiliation{math}{School of Mathematical Sciences, Peking University, Beijing, China }
\icmlaffiliation{byte}{Bytedance AI Lab, Haidian District, Beijing, China}
\icmlaffiliation{key}{Key Lab. of Machine Perception (MoE), School of Artificial Intelligence, Peking University, Beijing, China}
\icmlaffiliation{ai}{Institute for Artificial Intelligence, Peking University, Beijing, China}
\icmlaffiliation{pazhou}{Pazhou Lab, Guangzhou, China}

\icmlcorrespondingauthor{Jinwen Ma}{jwma@math.pku.edu.cn}
\icmlcorrespondingauthor{Zhouchen Lin}{zlin@pku.edu.cn}

\icmlkeywords{Machine Learning, ICML}

\vskip 0.3in
]



\printAffiliationsAndNotice{}  

\begin{abstract}
Steerable models can provide very general and flexible equivariance by formulating equivariance requirements in the language of representation theory and feature fields, which has been recognized to be effective for many vision tasks. However, deriving steerable models for 3D rotations is much more difficult than that in the 2D case, due to more complicated mathematics of 3D rotations. In this work, we employ partial differential operators (PDOs) to model 3D filters, and derive general steerable 3D CNNs, which are called PDO-s3DCNNs. We prove that the equivariant filters are subject to linear constraints, which can be solved efficiently under various conditions. As far as we know, PDO-s3DCNNs are the most general steerable CNNs for 3D rotations, in the sense that they cover all common subgroups of $SO(3)$ and their representations, while existing methods can only be applied to specific groups and representations. Extensive experiments show that our models can preserve equivariance well in the discrete domain, and outperform previous works on SHREC'17 retrieval and ISBI 2012 segmentation tasks with a low network complexity.
\footnote{Source code is available at https://github.com/shenzy08/PDO-s3DCNN.}
\end{abstract}

\section{Introduction \label{introduction}}
In the past few years, convolutional neural networks (CNNs) have dominated the computer vision field on various tasks. Compared with fully-connected (FC) neural networks, one main advantage of CNNs is that they are inherently translation equivariant: shifting an input and then feeding it through a CNN is the same as feeding the original input and then shifting the resulted feature maps. However, conventional CNNs are not naturally equivariant to other transformations, such as rotations.

Consequently, many works focus on incorporating more equivariance into CNNs. Firstly adopted in 2D images, \citet{cohen2016group} proposed group equivariant CNNs (G-CNNs) by rotating filters, which is equivariant over the cyclic group $C_4$ or the dihedral group $D_4$. Then some methods \citep{hoogeboom2018hexaconv,weiler2018learning} are successively proposed to exploit the larger groups. However, the equivariance achieved by these methods is very inflexible, because the feature maps inherently can only transform in one single pattern (which can be characterized by regular representation in the language of representation theory) as inputs rotate. But in many vision tasks, it would be better if the transformation behavior of feature maps could be flexibly defined as needed. For example, if the vectors in feature maps are used to predict the object orientations, they should also rotate as the inputs rotate.

 In order to utilize more flexible and general equivariance, \citet{cohen2017steerable} proposed a theoretical framework called steerable CNNs to describe equivariant models in the language of representation theory and feature fields. To be specific, the feature spaces are defined as the spaces of feature fields, characterized by a group representation which determines (``steers'') their transformation behavior under the transformations of the input. After the feature fields or representations are specified, equivariant filters are obtained by directly solving the equivariance constraints. Steerable CNNs include G-CNNs as special cases when employing regular features. \citet{weiler2019general} further extended the theoretical framework of steerable CNNs from $C_4$ to the Euclidean group $E(2)$ and its subgroups, proposing E2CNNs. E2CNNs are the most general steerable CNNs for 2D rotations so far, as they can deal with all common subgroups of $SO(2)$ and their representations. However, E2CNN \emph{cannot} be extended to the 3D case trivially, since the mathematics of rotation equivariance in 3D is much more complicated than in 2D because 3D rotations do \emph{not} commute \cite{thomas2018tensor}. 

Actually, equivariance is more important for the 3D case because 3D rotations are inevitable -- even though a 3D object is up-right, rotations may still occur around the vertical axis, not to mention that some 3D data even have no inherent orientations, such as molecular data \citep{anderson2019cormorant}. In addition, there are relatively few works on 3D CNNs, due to their computation and storage complexity. Thus exploiting equivariance to reduce both computation and storage is critical for applications of 3D CNNs.
N-body networks \citep{kondor2018n}, Tensor Field Networks (TFNs) \citep{thomas2018tensor} and SE3CNNs \citep{weiler20183d} succeeded in solving the equivariance constraints when defining the feature spaces via irreducible representations (irreps) of $SO(3)$. However, the constraints for discrete subgroups are still not solved.
CubeNets \citep{worrall2018cubenet} exploit equivariance over discrete subgroups by applying G-CNNs to 3D data, i.e., rotating 3D filters. However, viewed as steerable CNNs, CubeNets can only accommodate regular representations of the cubic group $\mathcal{O} $ and its subgroups, noting that there are only cubic rotational symmetries on regular 3D grids. They are not applicable to larger groups (e.g., the icosahedral group $\mathcal{I}$ and $SO(3)$) and other feature fields (e.g., quotient features). 
In all, existing methods can only deal with specific groups and representations, while the theory, analogous to E2CNN in the 2D case, that can accommodate all common subgroups of $SO(3)$ and their representations is still missing.

\begin{table*}[t]
	\small
	\caption{The comparison between PDO-s3DCNNs and other 3D steerable models. Ours can accommodate all common subgroups of $SO(3)$ and feature fields, while others can only address specific groups and feature fields.} \smallskip
	\centering
	\begin{tabular}{lccccccc}
		\toprule
		& \multicolumn{3}{c}{Group $\mathcal{G}$} & \multicolumn{3}{c}{Feature field} \\
		\cmidrule(r){2-4} \cmidrule(r){5-7}
		Method  &  $\mathcal{G}\leq \mathcal{O}$ & $\mathcal{I}$ & $SO(3)$ & Regular & Quotient & Irreducible & Data type\\
		\midrule
		N-body networks \citep{kondor2018n}  &&  & \Checkmark  &  &  & \Checkmark  & graphs\\
		TFN \citep{thomas2018tensor} & &  & \Checkmark  &  & & \Checkmark  & point clouds \\
		CubeNets \citep{worrall2018cubenet}  & \Checkmark &  &  & \Checkmark& & & voxels  \\
		SE3CNNs \citep{weiler20183d} & & & \Checkmark  & &  & \Checkmark  & voxels \\
		SE(3)-Transformers \citep{fuchs2020se}  &&  & \Checkmark  &  &  & \Checkmark  & point clouds/graphs\\
		\hline
		PDO-s3DCNNs (Ours) &\Checkmark &\Checkmark&\Checkmark&\Checkmark&\Checkmark&\Checkmark & voxels\\
		\bottomrule
	\end{tabular}
	\label{summary}
\end{table*}


Besides theoretical significance, we would like to emphasize that although the equivariance over a continuous group covers the equivariance over its discrete subgroups, empirical results \citep{weiler20183d} show that the models equivariant over discrete subgroups perform even better\footnote{Similar results also occur in our experiments, and we argue that it is due to the amenable normalizations and nonlinearities, as discussed in Sections \ref{cdlt} and \ref{shrec17}.}. 
This phenomenon explains why we still need to investigate the equivariance over discrete subgroups in practice even though some works have achieved $SO(3)$-equivariance.

In this work, we are devoted to proposing general steerable CNNs for 3D rotations and filling in the deficiency of existing works, and the comparison between our theory and other 3D steerable CNNs is summarized in Table \ref{summary}. Specifically,  \citet{shen2020pdo,shen2021pdo} showed that partial differential operators (PDOs) are very effective for designing equivariant CNNs, which are convenient for mathematical derivation in the continuous domain. Following them, we model 3D filters using PDOs, hence our models are called PDO-s3DCNNs. We emphasize that our work is \emph{not} a trivial extension of PDO-eConv \cite{shen2020pdo} to the 3D case, because PDO-eConv is limited to the \emph{regular features} of \emph{discrete subgroups}, and cannot address other feature fields and continuous groups whereas ours can. \citet{jenner2021steerable} successfully proposed steerabble PDOs in the 2D case. However, their theory cannot deal with arbitrary 3D rotation groups and their representations, such as discrete subgroups, due to the complexity of 3D rotations. In our theory, we prove that given any specific rotation group and feature field, the equivariance requirement for a PDO-based filter can be deduced to a linear constraint for its coefficients. Then we illustrate how to solve this constraint efficiently under various conditions. In the implementation, it is easy to discretize our derived PDO-based equivariant filters on volumetric inputs using Finite Difference (FD) or Gaussian discretization. Experiments verify that our methods can preserve equivariance well in the discrete domain, and outperform their counterparts, including SE3CNN \cite{weiler20183d} and CubeNet \cite{worrall2018cubenet}, on SHREC'17 retrieval and ISBI 2012 segmentation tasks, respectively, with a low network complexity.

In summary, our contributions are as follows:
\begin{itemize}
	\item We propose general steerable 3D CNNs for rotations, which cover both the continuous group $SO(3)$ and its discrete subgroups. In contrast, existing works cover either $SO(3)$ or its discrete subgroups, but not both.
	\item It is the first time that PDOs are utilized to design equivariant 3D CNNs, and also the first time that the quotient feature is investigated in the 3D case. Note that the quotient feature is a very flexible feature field, including the regular feature as a special case.
	\item We employ more discretization schemes than that in PDO-eConvs for implementation, and show that Gaussian discretization can preserve the equivariance for large groups in the discrete domain much better than FD.
	\item In experiments, our models perform significantly better than their counterparts SE3CNN on SHREC'17 retrieval task, and CubeNet on ISBI 2012 segmentation task, respectively.
\end{itemize}

\section{Related Work}

\subsection{Equivariant 2D CNNs}
\citet{lenc2015understanding} observed that CNNs spontaneously learn representations equivariant to some transformations, indicating that equivariance is a good inductive bias for CNNs. \citet{cohen2016group} succeeded in incorporating rotation equivariance into neural networks by group equivariant correlation, and proposed G-CNNs. However, this method can only deal with the 4-fold rotational symmetry. Thus some follow-up works \citep{zhou2017oriented,marcos2017rotation,hoogeboom2018hexaconv,weiler2018learning,shen2020pdo} focused on exploiting larger discrete groups, as more symmetries make models more data-efficient \citep{he2021efficient}. Also, there are some works \citep{worrall2017harmonic,esteves2018polar,finzi2020generalizing} managing to achieve rotation equivariance over the continuous group $SO(2)$. 

The above methods mostly achieve equivariance by using group equivariant correlation. To exploit more general equivariance, \citet{cohen2017steerable} proposed steerable CNNs, where feature spaces are characterized as feature fields. Then the equivariant filters are obtained by solving equivariance constraints directly. E2CNNs \citep{weiler2019general} are the most general steerable CNNs for 2D rotations, because they involve both the continuous group $SO(2)$ and its discrete subgroups into a unified framework. It motivates us to develop equally general steerable 3D CNNs for rotations, as such theory is still missing in the 3D case. 

\subsection{Equivariant 3D Models}
Some methods have achieved equivariance over 3D rotations by solving equivariant constraints, and they are successfully applied to graphs \citep{kondor2018n}, point clouds \citep{thomas2018tensor}, volumetric data \citep{weiler20183d} and mesh data \citep{he2021gauge}, respectively. Based on TFN \citep{thomas2018tensor}, \citet{fuchs2020se} proposed SE(3)-Transformer by designing equivariant self-attention. However, these methods can only deal with the continuous group $SO(3)$. \citet{lang2020wigner} provided a general characterization of equivariant filters for any compact group. Even so, the case for discrete subgroups of $SO(3)$ is still not solved for practical use due to the complexity of 3D rotations \cite{reisert2009spherical}. There are also some works exploiting the equivariance over discrete subgroups. \citet{worrall2018cubenet} proposed CubeNets by applying the idea of G-CNNs, which is equivariant over the cubic group and its subgroups. A similar idea is applied to the medical image analysis \citep{winkels2019pulmonary}. However, these methods cannot exploit larger groups because there are no more rotational symmetries on regular 3D grids, while our method can break this limit. \citet{chen2021equivariant} achieved approximate $SO(3)$-equivariance by discretizing the continuous group $SO(3)$ and sampling over point clouds, and their equivariance is very weak.

Besides, some methods achieve rotation equivariance on other data types. Specifically, some projected 3D objects to the sphere and designed rotation equivariant spherical CNNs \citep{cohen2018spherical,esteves2018learning,kondor2018clebsch,esteves2020spin}. \citet{esteves2019equivariant} applied the idea of G-CNNs to process multiple views of 3D inputs. Unfortunately, these methods lose translation equivariance. By contrast, our method is translation equivariant because PDOs are naturally translation equivariant.

\section{PDO-s3DCNNs \label{section3}}
\subsection{Prior Knowledge}
\textbf{Equivariance in 3D}\quad Equivariance indicates that the outputs of a mapping transform in a predictable way with the transformations of the inputs. To be specific, let $\Psi$ be a filter, which could be represented by a layer of neural network from the input feature space to the output feature space, and $\mathcal{G}$ is a transformation group. $\Psi$ is called group equivariant over $\mathcal{G}$ if it satisfies that $\forall g\in \mathcal{G}$,
\begin{equation}
	\pi'(g)\left[\Psi\left[f\right]\right]=\Psi\left[\pi(g)[f]\right],\label{mequivariance}
\end{equation}
where $f$ can be represented as a stack of feature maps $f_k$ (for $k=1,2,\cdots,K$) in deep learning. For ease of derivation, we further suppose each $f_k$ to be a smooth function over $\mathbb{R}^3$, then we have that $f\in C^{\infty}(\mathbb{R}^3,\mathbb{R}^K)$. $\pi(g)$ and $\pi'(g)$ are called \emph{group actions}, where $\pi(g)$ describes how the transformation $g$ acts on inputs, and $\pi'(g)$ allows us to ``steer'' the resulted feature $\Psi[f]$ without referring to the input $f$. In addition, since we hope that two transformations $g,h\in \mathcal{G}$ acting on the feature maps successively is equivalent to the composition of transformations $gh\in \mathcal{G}$ acting on the feature maps directly, we require that $\pi(g)\pi(h)=\pi(gh)$. The same is the case with $\pi'(g)$. 

In this work, we focus on the equivariance over rotations, so $\mathcal{G}$ is taken as a rotation group in the sequel. Since we employ PDOs to design equivariant 3D CNNs, our models are naturally translation equivariant.

\begin{figure}[t]
	\centering
	\includegraphics[scale=0.35]{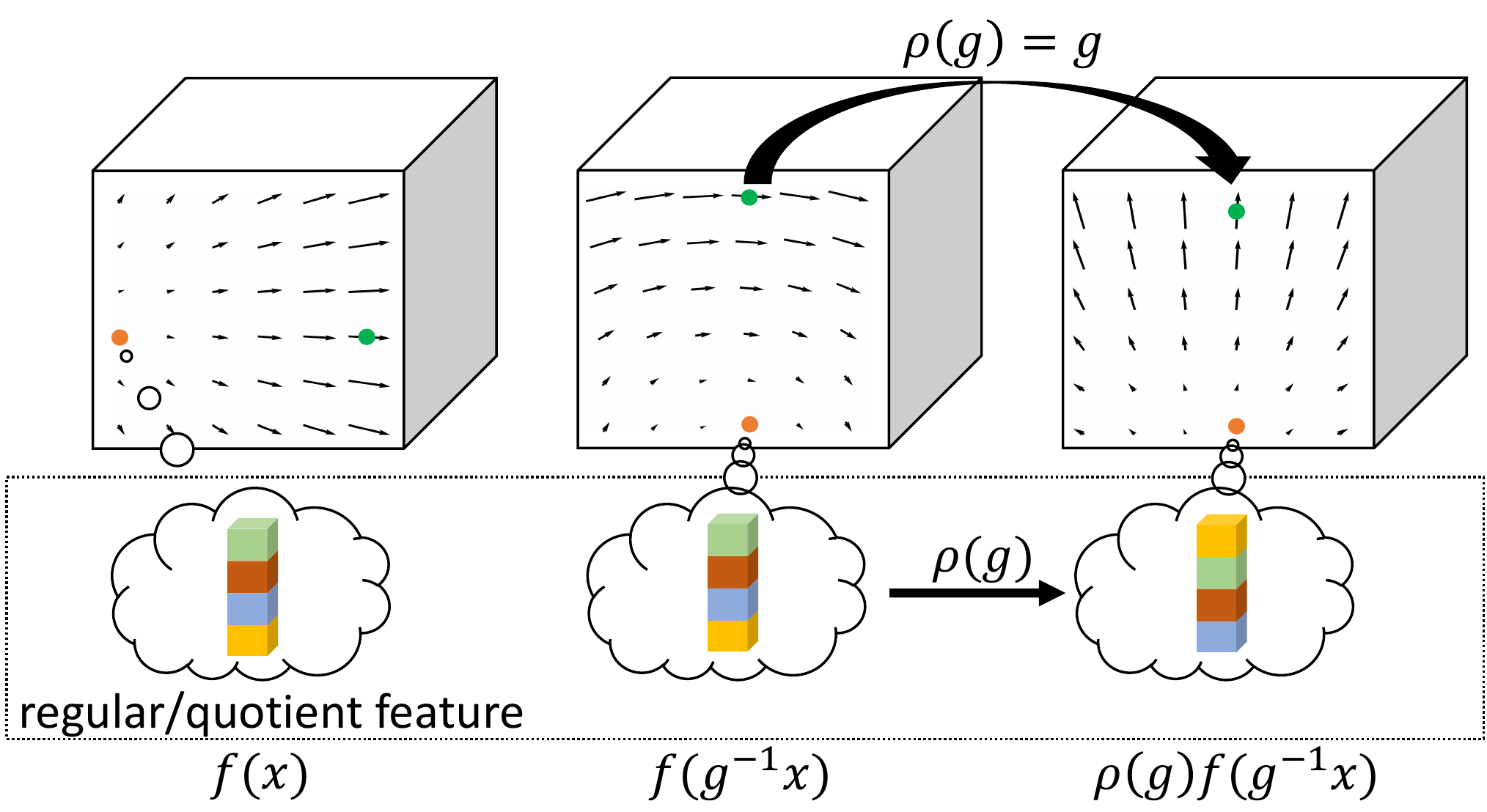}
	\caption{The schema of a rotation $g$ acting on a $\rho$-field. If $\rho(g)=(1)$, each feature vector is moved to its new position; if $\rho(g)=g$, each feature vector additionally rotates according to $g$; if $\rho(g)$ is a regular or quotient representation, the values of each feature vector are permutated. }
	\label{field}
\end{figure}

\noindent\textbf{Feature Fields and Group Representations}\quad Now we examine rotation group actions $\pi(g)$. The corresponding feature $f$ which is transformed according to $g$ is called a \emph{feature field}\footnote{Without ambiguity, we may omit ``field" sometimes for ease of presentation.}. For example, for input 3D data, the action can be naturally formulated as 
\begin{equation}
	[\pi(g)f](x) = f(g^{-1}x),
	\label{scalar}
\end{equation}
and the corresponding feature is called a \emph{scalar feature field}. If the output is also taken as a scalar feature, the filter $\Psi$ should be isotropic and restricts the capacity of neural networks \citep{cohen2019gauge}.

In order to address this problem, we consider the following much more general feature field:
\begin{equation}
	\left[\pi(g)f\right](x) = \rho(g)f\left(g^{-1}x\right),
	\label{rho}
\end{equation}
where $\rho(g)$ is a $K\times K$ matrix and determines how a rotation $g$ acts on the feature, and the corresponding feature field is called a $\rho$-field. $\pi(g)\pi(h)=\pi(gh)$ indicates that $\rho(g)\rho(h)=\rho(gh)$, so $\rho(g)$ is essentially a \emph{group representation}\footnote{This paper involves some terminologies in the group theory, and readers may refer to Appendix \ref{appa} and \citep{artin2011algebra,serre1977linear} for more details if interested.} of the group $\mathcal{G}$. The simplest examples are the trivial representation $\rho(g)=(1)$, which is a one-dimensional identical matrix and exactly corresponds to the scalar feature field. When $\rho(g)=g$, where $g$ can be parameterized as a $3\times 3$ rotation matrix, each feature vector $f(x)\in \mathbb{R}^3$ should additionally rotate according to $g$. In this way, if a feature vector is used to predict the object orientation, it can transform properly as the inputs rotate. The schema of a rotation $g$ acting on a  $\rho$-field is shown in Figure \ref{field}.

As will be shown in Section \ref{section4}, some common feature fields, e.g., regular, quotient, and irreducible features, which have been investigated in E2CNNs \citep{weiler2019general}, are re-defined in the 3D case and can be easily addressed by our theory. By contrast, previous works on steerable 3D CNNs can only deal with one of these features. For example, CubeNets \citep{worrall2018cubenet} and SE3CNNs \citep{weiler20183d} can only accommodate regular and irreducible features, respectively. Also, it is the first time that quotient features are investigated in the 3D case, which increases the flexibility of equivariant CNNs and allows us to decouple the computational cost from the group size.

\subsection{PDO-based Equivariant 3D Filters \label{kernels}}
\citet{shen2020pdo,shen2021pdo} have shown that PDOs are very effective for deriving equivariant models, as they are convenient for mathematical derivation. Following them, we employ a combination of PDOs to define a 3D filter on the input function $f\in C^{\infty}(\mathbb{R}^3,\mathbb{R}^{K})$:
\begin{align}
	\Psi[f] =& \left(A_0+\,A_1\partial_{x_1}+A_2\partial_{x_2}+A_3\partial_{x_3}+A_{11}\partial_{x_1^2}
	\right.\notag\\
	&\left.+A_{12}\partial_{x_1x_2}+A_{13}\partial_{x_1x_3}+A_{22}\partial_{x_2^2}+A_{23}\partial_{x_2x_3}\right.\notag\\
	&\left.+A_{33}\partial_{x_3^2}\right)[f],
	\label{mPsi}
\end{align}
where the coefficients $A_i\in \mathbb{R}^{K’\times K}$, and $K'$ is the number of channels of the output feature. We employ the PDOs up to the second-order, and higher-order PDOs can be obtained by stacking multiple $\Psi$'s. Then the requirement that $\Psi$ is equivariant can be deduced to a linear constraint for the coefficients. Supposing that the input and output features are $\rho$- and $\rho'$-field, respectively, we have the following theorem\footnote{All the detailed proofs can be found in Appendices \ref{appb} and \ref{appd}.}.

\begin{theorem}
	$\Psi$ is equivariant over $\mathcal{G}$, if and only if its coefficients satisfy the following linear constraints: $\forall g\in \mathcal{G}$,
	\begin{equation}
		\left\{
		\begin{array}{l}
			\rho'(g)B_0 = B_0\rho(g),\\
			\rho'(g)B_1 =B_1\left(g\otimes \rho(g)\right),\\
			\rho'(g)B_2 =B_2\left(P\left(g\otimes g\right)P^{\dag}\otimes\rho(g)\right),
		\end{array}
		\right.
		\label{mbase}
	\end{equation}
	i.e.,
	\begin{scriptsize}
		\begin{equation}
			\left\{
			\begin{array}{l}
				\left(I_K \otimes\rho'(g)-\rho(g)^T\otimes I_{K'}\right)vec(B_0) =0,\\
				\left(I_{3K} \otimes\rho'(g)-\left(g\otimes \rho(g)\right)^T\otimes I_{K'}\right)vec(B_1) =0,\\
				\left(I_{6K} \otimes\rho'(g)-\left(P(g\otimes g)P^{\dag}\otimes\rho(g)\right)^T\otimes I_{K'}\right)vec(B_2) =0,\\
			\end{array}
			\right.
			\label{msystem}
		\end{equation}
	\end{scriptsize}
	where
	\begin{equation*}
		\left\{
		\begin{array}{l}
			B_0 = A_0,\\
			B_1 = \left[A_1,A_2,A_3\right],\\
			B_2 =  \left[A_{11},A_{12},A_{13},A_{22},A_{23},A_{33}\right],\\
		\end{array}
		\right.
	\end{equation*}
	\begin{equation*}
		P = \left[
		\begin{array}{ccccccccc}
			1 &0 &0 &0 &0 &0 &0 &0 &0 \\
			0 &1/2 &0 &1/2 &0 &0 &0 &0 &0 \\
			0 &0 &1/2 &0 &0 &0 &1/2 &0 &0 \\
			0 &0 &0 &0 &1 &0 &0 &0 &0 \\
			0 &0 &0 &0 &0 &1/2 &0 &1/2 &0 \\
			0 &0 &0 &0 &0 &0 &0 &0 &1 \\
		\end{array}
		\right],
	\end{equation*}
	$P^{\dag}$ is the Moore-Penrose inverse of $P$, $I_K$ is a $K$-order identity matrix, $\otimes$ denotes the Kronecker product, the superscript ``$T$" denotes the transpose operator, and $vec(B)$ is the vectorization operator that stacks the columns of $B$ into a vector.
	\label{thm}
\end{theorem}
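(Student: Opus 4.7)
The plan is to expand both sides of the equivariance identity as linear combinations of the value $f(g^{-1}x)$, the gradient, and the symmetric Hessian of $f$ evaluated at $g^{-1}x$, and then match block-coefficients. For the left-hand side, $\pi'(g)[\Psi[f]](x) = \rho'(g)\Psi[f](g^{-1}x)$ is immediately of this form, with coefficients $\rho'(g)B_0$, $\rho'(g)B_1$, and $\rho'(g)B_2$. For the right-hand side, I would apply $\Psi$ to $\pi(g)f$ and use the chain rule on $[\pi(g)f](x) = \rho(g)f(g^{-1}x)$, pulling out the constant matrix $\rho(g)$ and exploiting $(g^{-1})_{ik} = g_{ki}$ (since $g\in SO(3)$) to obtain $\partial_{x_k}[\rho(g)f(g^{-1}x)] = \rho(g)\sum_i g_{ki}(\partial_i f)(g^{-1}x)$, and the analogous identity for second-order partials. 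Since a smooth $f$ can be prescribed to realize any value, any gradient, and any symmetric Hessian at the single point $g^{-1}x$, the equivariance identity then becomes equivalent to the equality of block-coefficients, and reversing the derivation yields the ``if'' direction immediately.

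Matching zeroth-order gives $\rho'(g)B_0 = B_0\rho(g)$ by direct reading. For the first-order block I would stack $(\partial_1 f,\partial_2 f,\partial_3 f)$ into a single $3K$-vector; the transformation carrying $(D_1 f)(g^{-1}x)$ to $D_1(\pi(g)f)(x)$ is then the block matrix whose $(k,i)$-block is $g_{ki}\rho(g)$, which is exactly $g\otimes\rho(g)$ by definition of the Kronecker product. Matching against $\rho'(g)B_1$ yields $\rho'(g)B_1 = B_1(g\otimes\rho(g))$.

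The main obstacle is the second-order block, because the Hessian $H$ of $f$ is symmetric and $B_2$ only stores its six independent entries. Under $y = g^{-1}x$ the Hessian transforms as $H\mapsto gHg^T$, whose full vectorization is $(g\otimes g)\,\mathrm{vec}(H)$. I would verify by a short direct computation that $P$ maps $\mathrm{vec}(H)$ to its six-entry symmetric encoding $h$, and that its Moore-Penrose inverse satisfies $P^{\dag}h = \mathrm{vec}(H)$ for every symmetric $H$; this uses that $P$ has full row rank together with the minimum-norm characterization of the pseudoinverse (checking that the natural reconstruction $\mathrm{vec}(H)$ is orthogonal to $\ker P$). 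Consequently the six-vector of symmetric second-order partials transforms by $P(g\otimes g)P^{\dag}$, and incorporating the per-channel factor $\rho(g)$ yields the $6K$-block transformation $P(g\otimes g)P^{\dag}\otimes\rho(g)$, producing the third constraint $\rho'(g)B_2 = B_2\bigl(P(g\otimes g)P^{\dag}\otimes\rho(g)\bigr)$. Finally, converting each block-matrix identity into the vectorized linear system stated in the theorem is a direct application of the standard identity $\mathrm{vec}(AXB) = (B^T\otimes A)\,\mathrm{vec}(X)$.
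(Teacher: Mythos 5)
Your proposal is correct and follows essentially the same route as the paper's proof: apply the chain rule to $\rho(g)f(g^{-1}x)$, use the Kronecker-product structure to organize the gradient and Hessian blocks, handle the symmetric second-order terms via $P$ and $P^{\dag}$, and compare coefficients. The only difference is that you explicitly justify the coefficient-comparison step by noting that a smooth $f$ can realize any prescribed value, gradient, and symmetric Hessian at a point, which the paper leaves implicit; this is a welcome addition for the ``only if'' direction but does not change the argument.
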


\emph{Proof sketch.}\quad We substitute Eqn. (\ref{mPsi}) into Eqn. (\ref{mequivariance}), and use the coefficient comparison method to deduce the linear constraints.

That is to say, if the coefficients of PDO-based filter $\Psi$ satisfy the constraints (\ref{mbase}) or (\ref{msystem}) for any $g\in \mathcal{G}$, then $\Psi$ is equivariant over $\mathcal{G}$. Since $\Psi$ is naturally translation equivariant, it is easy to verify that $\Psi$ is equivariant over all combinations of rotations and translations \citep{thomas2018tensor,weiler20183d}. Finally, according to the working spaces, we stack multiple layers $\Psi$'s properly, inserted by nonlinearities that do not disturb the equivariance. As a result, we can get PDO-based steerable 3D CNNs, called PDO-s3DCNNs.

\section{3D Rotation Groups and Feature Fields \label{section4}}
In this section, we show how to apply our theory to various 3D rotation groups and their representations. We provide some examples for ease of understanding. We emphasize that our theory can be applied to any representations, including irreps of discrete subgroups, not limited to these given ones.  In all, common 3D rotation groups include the continuous group $SO(3)$ and its discrete subgroups.
\subsection{Discrete Subgroups}
\textbf{Classification of Discrete Subgroups} \quad Up to conjugacy, any finite discrete 3D rotation group is one of the following groups \cite{artin2011algebra}:  
1) $C_N$: the cyclic group of rotations by multiples of $2\pi/N$ about an axis; 
2) $D_N$: the dihedral group of symmetries of a regular $N$-gon;
3) $\mathcal{T}$: the tetrahedral group of $12$ rotational symmetries of a tetrahedron;
4) $\mathcal{O}$: the octahedral/cubic group of $24$ rotational symmetries of an octahedron/cube; 
5) $\mathcal{I}$: the icosahedral group of $60$ rotational symmetries of an icosahedron.

Particularly, $D_2 = \mathcal{V}$ is also called Klein's four-group, which is the smallest non-cyclic group. The schema of groups $\mathcal{V},\mathcal{T},\mathcal{O},$ and $\mathcal{I}$ and their generators are shown in Appendix \ref{appc}. Particularly, two rotation groups being conjugate means that they are the symmetry groups of one polyhedron and its rotated version, respectively. We treat the groups in each conjugate class without any distinction, because 3D data always assume no preferred rotation transformations.

\noindent\textbf{Regular and Quotient Features} \quad The most important group representation of a finite group is called \emph{regular representation}, and the corresponding feature is called \emph{regular feature}. To be specific, each feature vector $f(x)$ is $|\mathcal{G}|$-dimensional and indexed by $\mathcal{G}$, where $|\mathcal{G}|$ denotes the size of $\mathcal{G}$, i.e., the number of elements in $\mathcal{G}$. A transformation $g\in \mathcal{G}$ acts on $f(x)\in \mathbb{R}^{|\mathcal{G}|}$ in the way that it permutes the value $f_{\tilde{g}}(x)$ to $f_{g\tilde{g}}(x)$ for any $\tilde{g}\in\mathcal{G}$. It is easy to derive the regular representation $\rho(g)$ by referring to the Cayley Table.

\emph{Quotient features} are closely related to regular features. Specifically, given a group $\mathcal{G}$ and its subgroup $\mathcal{H}\leq \mathcal{G}$, the left cosets $g\mathcal{H}$ of $\mathcal{H}$ partition $\mathcal{G}$ properly, where $g\in \mathcal{G}$, and we denote the set of left cosets as $\mathcal{G}/\mathcal{H}$. As for $\mathcal{H}$-quotient features, each feature vector $f(x)\in \mathbb{R}^{|\mathcal{G}|/|\mathcal{H}|}$ is indexed by $\mathcal{G}/\mathcal{H}$. A transformation $g\in \mathcal{G}$ acts on $f(x)$ in the way that it permutes the value $f_{\tilde{g}\mathcal{H}}(x)$ to $f_{g\tilde{g}\mathcal{H}}(x)$, and the corresponding group representation $\rho(g)$ can be obtained similarly. If $\mathcal{H}=\{e\}$, where $e$ is the identity element of $\mathcal{G}$, then an $\mathcal{H}$-quotient feature is equivalent to a regular feature, indicating that the regular feature is essentially a special case of the quotient feature.

An advantage of quotient features over regular features is that the number of channels required for each feature is reduced by $|\mathcal{H}|$ times. For example, for the octahedral group $\mathcal{O}$, a regular feature contains $24$ channels. Noting that $\mathcal{V}$ and $\mathcal{T}$ are both subgroups of $\mathcal{O}$, we can employ $\mathcal{V}$- and $\mathcal{T}$-quotient features, then each feature only contains $6$ and $2$ channels, respectively. We emphasize that although it seems that $\mathcal{O}/\mathcal{V}\cong D_3$ and $\mathcal{O}/\mathcal{T}\cong C_2$, where $\cong$ denotes the group isomorphism, employing quotient features does not indicate that the equivariance is reduced to the quotient space, because we still require the equivariance constraints to be satisfied for any $g$ in $\mathcal{G}$.

\noindent\textbf{Equivariant Kernels over Discrete Subgroups}\quad As derived in Section \ref{kernels}, given the input and the output feature fields, solving an equivariant kernel is no more difficult than solving the homogeneous linear equations, Eqn. (\ref{msystem}). However, Eqn. (\ref{msystem}) still needs to be satisfied for any $g\in \mathcal{G}$, and it is time-consuming to solve if $\mathcal{G}$ is very large. Fortunately, it is easy to verify that as long as Eqn. (\ref{mequivariance}) or (\ref{mbase}) is satisfied for the generators of $\mathcal{G}$, $\Psi$ can be an equivariant kernel over $\mathcal{G}$ (see Theorem \ref{thmb2} in Appendix). In this way, the bases of the coefficients $B_0,B_1$ and $B_2$ can be efficiently computed offline using singular value decomposition (SVD). Specifically, in order to solve a linear constraint $Qx=0$, we use SVD to decompose $Q$ as $U\Sigma V^T$ , and
then the non-zero column vectors of $V$ are basic solutions
of the linear constraint, i.e., the bases of equivariant filters. For example, when $\mathcal{G}=\mathcal{O}$, the dimension of each solution space is given in Table \ref{O_base}. Then we parameterize $B_0,B_1$ and $B_2$ as linear combinations of their bases, and the coefficients are learnable. Finally, they are substituted into Eqn. (\ref{mPsi}) to obtain a parameterized equivariant kernel $\Psi$, which is learnable.

\begin{table}[t]
	\scriptsize
	\caption{$(n_{B_0},n_{B_1},n_{B_2})$ denote the dimensions of solution spaces of $B_0, B_1$ and $B_2$. $\rho(g)$ and $\rho'(g)$ denote the group representations of the input and the output feature fields, respectively.}
	\centering
	\begin{tabular}{lcccc}
		\toprule
		\diagbox{$\rho(g)$}{$\rho'(g)$}   &  Trivial & $\mathcal{T}$-quotient  & $\mathcal{V}$-quotient & Regular \\
		\midrule
		Trivial  & $(1,0,1)$ & $(1,0,1)$ & $(1,0,3)$ & $(1,3,6)$\\
		$\mathcal{T}$-quotient & $(1,0,1)$ & $(2,0,2)$ & $(2,0,6)$ & $(2,6,12)$ \\
		$\mathcal{V}$-quotient & $(1,0,3)$ & $(2,0,6)$ & $(6,0,18)$ & $(6,18,36)$\\
		Regular & $(1,3,6)$ & $(2,6,12)$ & $(6,18,36)$ & $(24,72,144)$\\
		\bottomrule
	\end{tabular}
	
	\label{O_base}
\end{table}

\subsection{Continuous Group $SO(3)$}
As for the $SO(3)$ group, we cannot leverage regular or quotient features, because $SO(3)$ is an infinite group. Following SE3CNN \cite{weiler20183d}, we use irreducible features instead, which are realized by irreps. Specifically, any representation of $SO(3)$ can be decomposed into irreps of dimension $2l+1$, for $l=0,1,2,\cdots,\infty$. The irrep acting on the $l$-order irreducible features $f(x)\in \mathbb{R}^{2l+1}$ is known as the Wigner-D matrix of order $l$, denoted as $D^l(\rho)$. Particularly, $D^0(\rho)=(1)$, which exactly corresponds to the trivial representation. Then we solve the linear equations (\ref{msystem}) given the input and the output features. However, the difficulty lies in that Eqn. (\ref{msystem}) essentially contains infinite equations as $SO(3)$ contains infinite elements, which is impossible to solve directly. Fortunately, we have the following observation, and then the equivariant filters can be solved easily.
\begin{theorem}
	If Eqn. (\ref{mequivariance}) or (\ref{mbase}) is satisfied for $g_1=Z(1)$ and $g_2 = Y(1)$, where
		\begin{scriptsize}
	\begin{align*}
		Z(\alpha)=\left[
		\begin{array}{ccc}
			\cos\alpha & -\sin\alpha & 0\\
			\sin\alpha & \cos\alpha & 0\\
			0 & 0 & 1\\
		\end{array}
		\right],Y(\beta)=\left[
		\begin{array}{ccc}
			\cos\beta & 0  & \sin\beta\\
			0  & 1  & 0\\
			-\sin\beta & 0 & \cos\beta
		\end{array}
		\right],
	\end{align*}
		\end{scriptsize}
	then $\forall g\in SO(3)$, Eqn. (\ref{mequivariance}) is satisfied.
	\label{thm3}
\end{theorem}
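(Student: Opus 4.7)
\emph{Proof plan.} My strategy is to show that the set
\[
S \;:=\; \{\,g \in SO(3) \,:\, \text{Eqn.~(\ref{mequivariance}) holds for } g\,\}
\]
is simultaneously a subgroup of $SO(3)$ and a topologically closed subset, and then to argue that $Z(1)$ and $Y(1)$ together generate a subgroup whose closure is all of $SO(3)$. The conclusion $S=SO(3)$ will follow because a closed subgroup containing both $Z(1)$ and $Y(1)$ must contain this closure.

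For the algebraic part I would use that $\pi$ and $\pi'$ are group homomorphisms: if $g,h\in S$, then $\pi'(gh)\Psi[f]=\pi'(g)\pi'(h)\Psi[f]=\pi'(g)\Psi[\pi(h)f]=\Psi[\pi(g)\pi(h)f]=\Psi[\pi(gh)f]$, so $gh\in S$, and substituting $\pi(g^{-1})f$ in place of $f$ shows $g^{-1}\in S$. For the topological part I would invoke Theorem~\ref{thm}, by which membership in $S$ is equivalent to the linear coefficient constraints in Eqn.~(\ref{mbase}); the entries of $g$, $\rho(g)$ and $\rho'(g)$ are continuous functions of $g\in SO(3)$, so each of the three matrix equations in Eqn.~(\ref{mbase}) cuts out a closed subset, and $S$ is their intersection.

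The substantive step is to show $\overline{\langle Z(1),Y(1)\rangle}=SO(3)$. Because $\pi$ is irrational, the image of $\mathbb{Z}$ in $\mathbb{R}/2\pi\mathbb{Z}$ is dense, so $\overline{\langle Z(1)\rangle}=\{Z(\alpha):\alpha\in\mathbb{R}\}$ and likewise $\overline{\langle Y(1)\rangle}=\{Y(\beta):\beta\in\mathbb{R}\}$. Every $g\in SO(3)$ admits a $ZYZ$ Euler-angle factorization $g=Z(\alpha)Y(\beta)Z(\gamma)$, which exhibits $g$ as a product of three elements each lying in $\overline{\langle Z(1),Y(1)\rangle}$; hence this closure equals $SO(3)$. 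Combining the three ingredients, $S$ is a closed subgroup of $SO(3)$ containing $SO(3)$, so $S=SO(3)$. I expect the main obstacle to be this density step, where one has to make sure the Euler decomposition uses only rotations about the $Y$ and $Z$ axes supplied by the hypothesis and to combine the two one-dimensional density statements correctly inside the compact non-abelian group $SO(3)$; the other steps are routine consequences of the representation axioms and continuity.
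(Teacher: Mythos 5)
Your proof is correct and follows essentially the same route as the paper's: density of the powers of $Z(1)$ and $Y(1)$ among the $Z$- and $Y$-axis rotations (from the irrationality of $\pi$), a limiting/closedness argument, and the ZYZ Euler decomposition to reach all of $SO(3)$. Your closed-subgroup packaging is a tidier way to organize the same ingredients, and appealing to the matrix constraints of Eqn.~(\ref{mbase}) makes the continuity step more explicit than the paper's direct passage to the limit in the operator identity.
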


\emph{Proof intuition.} Any elements in $SO(3)$ can be approximately generated by $Z(1)$ and $Y(1)$.

\section{Implementation \label{implementation}}
We have introduced equivariant 3D filters in the continuous domain. They are easy to implement in the discrete domain by discretizing PDOs using the given discrete data. 
\subsection{Discretization \label{dis}}
\textbf{FD}\quad This is a similar discretization method to that adopted in PDO-eConvs \cite{shen2020pdo}. For the volumetric data/feature $\bm{F}$, it can be viewed as a sample from a continuous function $f$ defined on $\mathbb{R}^3$. Then PDOs acting on $f$ can be discretized as convolutional kernels acting on $\bm{F}$ using FD. Formally, we have that $\forall i\in S_x$,
\begin{equation*}
	\partial_i[f] = u_i \ast \bm{F} + O\left(\rho^2\right),
\end{equation*}
where $S_x= \left\{x_1,x_2,x_3,x_1^2,x_2^2,x_3^2,x_1x_2,x_1x_3,x_2x_3\right\}$, $u_i$'s are the FD schemes of PDOs (see Appendix \ref{appd1}), $\ast$ denotes the convolution operation, and $\rho$ is the diameter of a 3D grid. Since $\Psi$ is essentially a combination of PDOs, it can be discretized as a convolutional filter $\hat \Psi$. Particularly, we observe that all the PDOs up to the second-order can be approximated using $3\times 3\times 3$ convolutional filters with a quadratic precision, so $\hat \Psi$ is also of the size $3\times 3\times 3$, and the equivariance error resulted from FD discretization is of the quadratic order. 

\noindent \textbf{Gaussian Discretization}\quad We can also employ the derivatives of Gaussian function for estimation \cite{jenner2021steerable}: given points $x^n\in \mathbb{R}^3$, $\forall i\in S_x$,
\begin{equation*}
	\partial_i[f](x) \approx \sum_{n=1}^N \partial_iG(x^n;\sigma)f(x+x^n),
\end{equation*}
where $G(x;\sigma)$ is a Gaussian kernel with standard deviation $\sigma$ around 0. 

\noindent\textbf{Extension to Point Clouds}\quad Essentially, PDOs can also be approximated based on irregular point clouds . Specifically, the simplest method to apply our method to 3D point
clouds is using GD, which is easy to implement on irregular
data. We can also estimate partial derivatives using the
Taylor expansion and the least square method, which has
 been successfully implemented in \citep{shen2021pdo}. As a result, our models can also operate on point clouds. In practice, we operate on the volumetric data because they are more general than point cloud data: point clouds can always be voxelized, but not vice versa, such as medical images. Also, regular grids can be processed more efficiently on current hardware.

\subsection{Common Deep Learning Techniques \label{cdlt}}
\textbf{Nonlinearities and Batch Normalization (BN)}\quad  In equivariant models, inserted nonlinearities and BN should be compatible with employed features to preserve equivariance. For trivial, regular, and quotient features, pointwise nonlinearities, e.g., ReLU, are admissible because their representations are realized by permutation matrices. As for BN, it should be implemented with a single scale and a single bias per feature, instead of per channel. In the continuous case, as the irreducible features are not realized by permutation matrices, pointwise nonlinearities and standard BN are not admissible. We instead use the scale-BN and scale-like nonlinearities, e.g., gated nonlinearities, to preserve equivariance (see Appendix \ref{appd2}). However, compared with standard BN, scale-BN cannot correct activations to be zero-mean, and thus is not as  beneficial as standard BN for internal covariate shift reduction and optimization \citep{bjorck2018understanding}. Besides, as proven in \citep{finzi2021practical}, scale-like nonlinearities are not sufficient for universality, and can limit the model performance. These disadvantages also account for why most related works \citep{weiler2019general,jenner2021steerable} employ regular or quotient representations of discrete subgroups for implementation instead of irreps.

\noindent\textbf{Composition of Basic Feature Fields}\quad  Analogy to multiple channels, we can also put multiple basic feature fields together, e.g., regular, quotient and irreducible ones, to acquire more general feature fields (see Appendix \ref{appd3}).

\section{Experiments \label{section6}}
We perform extensive experiments to evaluate the performance of our models. The experimental details for each task are provided in Appendix \ref{appe}.
\subsection{3D Tetris (Testing 3D Equivariance)}
We confirm the equivariance of our models on 3D Tetris, which is a common dataset for testing 3D rotation equivariance \citep{thomas2018tensor,weiler20183d}. We feed the dataset in a single orientation into the network during training, and then test the network with rotated shapes. 

Firstly, we discretize PDOs using FD, and test three $\mathcal{O}$-steerable models with regular, $\mathcal{V}$- and $\mathcal{T}$-quotient features, on 3D Tetris with cubic rotations.  As shown in Table \ref{O}, all the models achieve 100\% test accuracies. Compared with regular features, quotient features have fewer channels, and thus they have a lower computational cost. Although we use a small dataset here, the complexity comparison can be scaled to larger datasets and models. Besides, the $SO(3)$-steerable CNN also achieves 100\% test accuracy under this setting. Theoretically, our $\mathcal{G}$-steerable CNNs with FD discretization can achieve exactly equivariance over the cubic group $\mathcal{O}$ in the discrete domain if $\mathcal{G}$ includes $\mathcal{O}$ as a subgroup (see Theorem \ref{thd1} in Appendix).

\begin{table}[t]
	\small
	\caption{The test accuracy of the $\mathcal{O}$- and $SO(3)$- steerable CNNs discretized by FD on 3D Tetris with cubic rotations.}
	\centering
	\begin{tabular}{ccccc}
		\toprule
		Group & Feature field & Test acc. (\%)  &\# Params & Time \\
		\midrule
		$\mathcal{O}$ & Regular & $100.0\pm 0.0$ & 31k &14.3s\\
		$\mathcal{O}$ & $\mathcal{V}$-quotient &  $100.0\pm 0.0$  & 5.5k & 2.3s\\
		$\mathcal{O}$ & $\mathcal{T}$-quotient &  $100.0\pm 0.0$  &2.2k & 1.3s\\
		\hline
		$SO(3)$ & Irreducible &  $100.0\pm 0.0$ & 22.8k & 66.7s\\
		\bottomrule
	\end{tabular}
	\label{O}
\end{table}

\begin{table}[t]
	\caption{The test accuracy of the $SO(3)$-steerable CNNs on 3D Tetris with arbitrary rotations.}
	\centering
	\begin{tabular}{lccc}
		\toprule
		Discretization & Kernel size& Test acc. (\%)  & Time\\
		\midrule
		FD & $3\times3\times3$ & $18.20\pm 3.13$  & 66.7s \\
		Gaussian & $3\times3\times3$ & $29.99\pm 4.98$ & 67.3s\\
		Gaussian & $5\times5\times5$ & $99.04\pm 0.14$ & 109.5s\\
		\bottomrule
	\end{tabular}
	\label{SO3}
\end{table}


We then test our $SO(3)$-steerable model on the shapes with arbitrary rotations, and FD cannot perform well (see Table \ref{SO3}), because inputs are assumed to be smooth while real data are always non-smooth especially on the edge. Considering that Gaussian functions can help smooth features, we then employ Gaussian discretization and find that it performs very well when using a kernel size of $5\times 5\times 5$. Also, we provide the equivariance error analysis in Appendix \ref{appe1}.

\subsection{SHREC'17 Retrieval  \label{shrec17}}
The SHREC'17 retrieval task \citep{savva2017large} contains 51,162 models of 3D shapes belonging to 55 classes. This dataset is divided into 35,764 training samples, 5,133 validation samples, and 10,265 test samples. We focus on the ``perturbed'' version where models are arbitrarily rotated. The retrieval performance is given by the average value of the mean average precisions (mAP) of micro-average version and macro-average version, denoted as the \emph{score}. We convert these 3D models into voxels of size $64\times 64\times 64$.


\begin{table}[t]
	\caption{The retrieval performance of $\mathcal{V}$-, $\mathcal{T}$-, $\mathcal{O}$-, $\mathcal{I}$- and $SO(3)$-steerable CNNs, tested on SHREC’17.}
	\centering
	\begin{tabular}{cccc}
		\toprule
		Group & Discretization & Feature field & Score \\
		\midrule
		$\mathcal{V}$ &  FD & Regular & 52.7\\
		$\mathcal{T}$ &  FD & Regular & 57.6 \\
		$\mathcal{O}$ &  FD & Regular & \textbf{58.6} \\
		$\mathcal{I}$ &  Gaussian & Regular & 55.5\\
		\hline
		$SO(3)$ & FD & Irreducible & 57.4 \\
		$SO(3)$ & Gaussian  & Irreducible & 58.3 \\
		\bottomrule
	\end{tabular}
	\label{5scores}
\end{table}
\begin{table*}[t]
	\small
	\caption{SHREC' 17 perturbed dataset results. Mixed features mean that the features are composed of regular and $\mathcal{V}$-quotient features. Score is the average value of mAP of micro-average version and macro-average version.} \smallskip
	\centering
	\begin{tabular}{lccccccccc}
		\toprule
		&& \multicolumn{3}{c}{micro} & \multicolumn{3}{c}{macro} & &  \\
		\cmidrule(r){3-5} \cmidrule(r){6-8}
		Method  &  Score & P@N & R@N & mAP & P@N & R@N & mAP  & \#Params & Input\\
		\midrule
		RI-GCN \cite{kim2020rotation} &56.2&69.1&68.0&64.5&47.4&57.0&47.8& 4.4M  &point clouds\\
		 \cite{li2021closer} (with TTA) & 56.5 & 69.4 & 69.4 & 65.8 & 48.1 & 56.0 & 47.2& 2.9M &point clouds\\
		\hline
		S2CNN \cite{cohen2018spherical} & -  & 70.1 & 71.1 & 67.6 & - & - & - & 1.4M & spherical\\
		FFS2CNN \cite{kondor2018clebsch} & - & 70.7 & 72.2 & 68.3 & -&- &- &- & spherical \\
		VolterraNet \cite{banerjee2020volterranet} & - & 71.0 & 70.0 & 67.0 & - & - & - &  0.4M & spherical  \\
		\cite{esteves2018learning} & 56.5 & 71.7 & \textbf{73.7} & 68.5 & 45.0 & 55.0 & 44.4 & 0.5M & spherical\\
		\cite{cobb2020efficient} & - & 71.9 & 71.0 & 67.9 & - & - & - &  0.25M & spherical \\
		\hline
		SE3CNN \cite{weiler20183d} & 55.5 & 70.4 & 70.6 & 66.1 & 49.0 & 54.9 & 44.9  & 0.14M & voxels\\
		\hline
		Ours ($SO(3)$) & 58.3 & 73.1 & 73.4 & 69.3 & \textbf{52.5} & 55.4 & 47.3 & 0.15M & voxels\\		
		Ours ($\mathcal{O}$ with regular features) & 58.6 & 72.9 & 73.0 & 68.8 & 51.9 & 57.7 & 48.3 & 0.15M & voxels\\
		Ours ($\mathcal{O}$ with $\mathcal{V}$-quotient features) &55.5 & 69.2 & 69.6 & 65.0 & 48.0 & 56.3 & 46.0 & 0.15M & voxels\\
		Ours ($\mathcal{O}$ with mixed features) & \textbf{59.1} & \textbf{73.2} & 73.3 & \textbf{69.3} & 51.7 & \textbf{57.8} & \textbf{48.8}  & 0.15M & voxels\\
		
		\bottomrule
	\end{tabular}
	\label{detailed}
\end{table*}

Firstly, we investigate the impact of the equivariance group. We evaluate the performance of $\mathcal{V}$-, $\mathcal{T}$-, $\mathcal{O}$-, and $\mathcal{I}$-steerable CNNs with regular features. The $\mathcal{I}$-steerable one is discretized using Gaussian discretization as its equivariance exceeds the symmetries of regular 3D grids, and others using FD. We adjust the feature numbers so that these models use comparable numbers of parameters (about 0.15M) for a fair comparison. As shown in Table \ref{5scores}, the $\mathcal{O}$-steerable model performs better than $\mathcal{V}$- and $\mathcal{T}$-steerable ones, as it can exploit more symmetries. However, for the $\mathcal{I}$-steerable CNN, we find it too large (each feature contains $60$ channels) and slow to converge during training, and it performs inferiorly to the $\mathcal{O}$-steerable one. We then evaluate $SO(3)$-steerable models with irreducible features. When using FD for discretization, it performs significantly inferiorly to the $\mathcal{O}$-steerable one even though it can also achieve exact equivariance over $\mathcal{O}$. A similar phenomenon has also occurred in E2CNNs \citep{weiler2019general} in the 2D case, where $C_N$-steerable models significantly outperform $SO(2)$-steerable models. We argue that this is because irreducible features are not amenable to conventional BN and pointwise nonlinearities, which degrades the model performance in practice. Empirically, we evaluate an $\mathcal{O}$-steerable model using the special BNs and nonlinearities for irreps, and they
perform worse than standard ones (56.5 vs. 58.6), which verifies
their inferiority. When using Gaussian discretization, the performance gets better due to the improved equivariance, but the disadvantage still exists (58.3 vs. 58.6). In addition, the larger kernel size ($5\times 5 \times 5$) results in a larger computational cost.

Then we compare our methods to various equivariant 3D models, as shown in Table \ref{detailed}. Our $SO(3)$-steerable model performs much better than its counterpart SE3CNN using comparable numbers of parameters, showing great expressive ability of our PDO-based filters. Compared with some recent point clouds based rotation equivariant models \cite{kim2020rotation,chen2021equivariant,li2021closer}, our method performs better, as volumetric data are more regular and can provide more structured and compact information.  Also, our method shows great parameter efficiency (0.15M vs. 2.9M+). In addition, our method outperforms some equivariant spherical CNNs \cite{cohen2018spherical,kondor2018clebsch,esteves2018learning,banerjee2020volterranet,cobb2020efficient}, where 3D shapes are projected to a sphere to achieve rotation equivariance. Spherical models perform worse as they are not translation equivariant while ours are.

In addition, we examine the effectiveness of quotient features. The $\mathcal{O}$-steerable model using $\mathcal{V}$-quotient features performs inferiorly to the model using regular features (55.5 vs. 58.6). It is probably because quotient features cannot differentiate the transformations in one coset in the channel domain as they relate to the same representation, which could result in the weak performance. In addition, we notice that the filters between quotient features essentially do not employ the first order PDOs, which may also relate to their weak expressivity. As shown in Table \ref{O_base}, when the input and the output feature fields are both quotient features, the coefficients of the first order PDOs, $B_1$, can only be $\bm{0}$. An ablation study to justify the choice of PDOs is in Appendix \ref{appe2}. When we employ a feature field composed of regular and quotient features, the performance can be drastically improved (from 55.5 to 59.1),  indicating that quotient features can be effective supplements for the regular features. Essentially, regular features of $\mathcal{G}$-steerable models are redundant for
some $\mathcal{H}$-invariant patterns ($\mathcal{H}\leq\mathcal{G}$), so incorporating some
H-quotient features can help relieve this redundancy and
compress the model, which is beneficial for model learning.

\subsection{ISBI 2012 Challenge}
%
%
%
%
%
\begin{table}[t]
	\caption{The performance on the ISBI 2012 Challenge evaluated and compared by $V_{\text{rand}}$ and $V_{\text{info}}$, which are explained in \cite{arganda2015crowdsourcing}. Larger is better. Different models are primarily compared by $V_{\text{rand}}$, as it is more robust than $V_{\text{info}}$.}
	\centering
		\small
	\begin{tabular}{lcc}
	\toprule
	Method & $V_{\text{rand}}$  & $V_{\text{info}}$\\
	\midrule
	U-Net \citep{ronneberger2015u}  & 0.97276  & 0.98662\\
	FusionNet \citep{quan2016fusionnet}& 0.97804  & 0.98995\\
	CubeNet \citep{worrall2018cubenet} & 0.98018  & 0.98202\\
	SFCNN \cite{weiler2018learning}& 0.98680  & \textbf{0.99144}\\
	\hline
	PDO-s3DCNN ($\mathcal{V}$) &0.98415 & 0.99031\\
	PDO-s3DCNN ($\mathcal{O}$) &\textbf{0.98727} &  0.99089\\
	\bottomrule
	\end{tabular}
	\label{ISBI}
\end{table}

ISBI 2012 Challenge \cite{arganda2015crowdsourcing} is a volumetric boundary segmentation benchmark, and the target is to segment Drosophila ventral nerve cords from a serial-section transmission electron microscopy image. This dataset is suitable for evaluating a rotation-equivariant model, because biomedical images always have no inherent orientations. The full training image is $512\times 512\times 30$ voxels in shape, and each voxel is $4\times 4\times 50$ nm$^3$. The setting is the same for test
images. Following the official ranking list, we compare different models using the metrics $V_{\text{rand}}$ and $V_{\text{info}}$, which is explained in \citep{arganda2015crowdsourcing}. Larger is better. Different models are primarily compared by $V_{\text{rand}}$, as it is empirically more robust than $V_{\text{info}}$.

Firstly, we evaluate the $\mathcal{V}$-steerable model. We replace the equivariant filters in CubeNet \citep{worrall2018cubenet} by ours with regular features. We keep the architecture and the numbers of features the same as that in CubeNet to maintain a fair comparison. As shown in Table \ref{ISBI}, our model performs better than CubeNet with a much lower network complexity (4.4M vs. 11.9M), noting that on average each learnable filter of our model contains 10 parameters, while that of CubeNet contains 27, which shows a great storage advantage. We then employ an $\mathcal{O}$-steerable model (use comparable numbers of parameters with the $\mathcal{V}$-steerable one) and achieve a better result, because it can exploit more symmetries. Particularly, CubeNet cannot exploit $\mathcal{O}$-equivariance, because the voxels here are not cubic. Our model even outperforms SFCNN \cite{weiler2018learning} (0.98727 vs. 0.98680 in $V_{\text{rand}}$), which additionally uses a task-specific lifting multi-cut \citep{beier2016efficient} post-processing to improve performance, while we do not use any post-processing.

\section{Conclusions \label{conclusions}}
We employ PDOs to establish general 3D steerable CNNs, which cover both the continuous group $SO(3)$ and its discrete subgroups. Our theoretical framework can also deal with various feature fields easily, such as regular, quotient, and irreducible features. Experiments verify that our methods can preserve equivariance well in the discrete domain and perform better than previous works with a low network complexity.

Actually, besides volumetric data, our method can also operate on the point cloud data. In addition, our theoretical framework can be applied to arbitrary representations, while we only work on several given representations in practice. We will explore more possibilities in future work.

\section*{Acknowledgements}

This work was supported by the National Key Research and Development Program of China under grant 2018AAA0100205. Z. Lin was supported by the NSF China (No. 61731018), NSFC Tianyuan Fund for Mathematics (No. 12026606), and Project 2020BD006 supported by PKU-Baidu Fund.

\bibliography{PDO-s3DCNNs}
\bibliographystyle{icml2022}

\newpage
\clearpage
\appendix
\section{Groups and Group Representations\label{appa}}
\textbf{Groups}\quad
\\

\begin{itemize}
	\item Closure: $\forall g_1,g_2\in \mathcal{G}, g_1\circ g_2\in\mathcal{G}$;
	\item Associativity: $\forall g_1,g_2,g_3\in \mathcal{G}, (g_1\circ g_2)\circ g_3 = g_1\circ (g_2 \circ g_3)$;
	\item Identity: There exists an identity element $e\in\mathcal{G}$ such that $e\circ g=g\circ e=g$ for all $g\in \mathcal{G}$;
	\item Inverses: $\forall g\in\mathcal{G}$, there exists a $g^{-1}\in\mathcal{G}$, such that $g^{-1}\circ g=g\circ g^{-1}=e$.
\end{itemize}
In practice, we always omit writing the binary composition operator $\circ$, so we would write $gh$ instead of $g\circ h$. Groups can be finite or infinite.

\noindent\textbf{Group Representations}\quad A group representation $\rho:\mathcal{G}\rightarrow GL(N)$ is a mapping from a group $\mathcal{G}$ to the set of $N\times N$ invertible matrices $GL(N)$. Critically, $\rho$ satisfies the following property:
\begin{equation*}
	\forall g_1,g_2\in \mathcal{G}, \quad\rho\left(g_1g_2\right)=\rho\left(g_1\right)\rho\left(g_2\right).
\end{equation*}

\noindent\textbf{Group Isomorphism}\quad
Given two groups $\mathcal{G}_1$ and $\mathcal{G}_2$, the two groups are isomorphic if there exists an isomorphism from one to the other, denoted as $\mathcal{G}_1\cong\mathcal{G}_2$. Spelled out, this means that there is a bijective function $\sigma:\mathcal{G}_1\to\mathcal{G}_2$ such that
\begin{equation*}
	\forall u,v\in \mathcal{G}_1, \quad \sigma(uv)=\sigma(u)\sigma(v).
\end{equation*}

\section{The Proofs of Theorems\label{appb}}

We define a convolution on the input function $f\in C^{\infty}(\mathbb{R}^3,\mathbb{R}^{K})$:
\begin{align}
	\Psi[f] =& \left(A_0+\,A_1\partial_{x_1}+A_2\partial_{x_2}+A_3\partial_{x_3}+A_{11}\partial_{x_1^2}\right.\notag\\
	&\left.+A_{12}\partial_{x_1x_2}+A_{13}\partial_{x_1x_3}+A_{22}\partial_{x_2^2}+A_{23}\partial_{x_2x_3}\right.\notag\\
	&\left.+A_{33}\partial_{x_3^2}\right)[f],
	\label{Psi}
\end{align}
where the coefficients $A_i\in \mathbb{R}^{K’\times K}$. In order that $\Psi$ is an equivariant mapping over $\mathcal{G}$, it should satisfy the equivariance condition: $\forall g\in \mathcal{G}$,
\begin{equation}
	\pi'(g)\left[\Psi[f]\right]=\Psi\left[\pi(g)[f]\right],\label{equivariance}
\end{equation}
where
\begin{equation*}
	\left\{
	\begin{array}{l}
		[\pi(g)f](x) = \rho(g)f\left(g^{-1}x\right),\\
		\left[\pi'(g)f\right](x) = \rho'(g)f\left(g^{-1}x\right).\\
	\end{array}
	\right.
	\label{rho}
\end{equation*}
To solve this equivariance requirement, we have the following theorem.

\begin{theorem}
	$\Psi$ is equivariant over $\mathcal{G}$, if and only if its coefficients satisfy the following linear constraints: $\forall g\in \mathcal{G}$,
	\begin{equation}
		\left\{
		\begin{array}{l}
			\rho'(g)B_0 = B_0\rho(g),\\
			\rho'(g)B_1 =B_1\left(g\otimes \rho(g)\right),\\
			\rho'(g)B_2 =B_2\left(P\left(g\otimes g\right)P^{\dag}\otimes\rho(g)\right),
		\end{array}
		\right.
		\label{base}
	\end{equation}
	i.e.,
	\begin{scriptsize}
		\begin{equation}
			\left\{
			\begin{array}{l}
				\left(I_K \otimes\rho'(g)-\rho(g)^T\otimes I_{K'}\right)vec(B_0) =0,\\
				\left(I_{3K} \otimes\rho'(g)-\left(g\otimes \rho(g)\right)^T\otimes I_{K'}\right)vec(B_1) =0,\\
				\left(I_{6K} \otimes\rho'(g)-\left(P(g\otimes g)P^{\dag}\otimes\rho(g)\right)^T\otimes I_{K'}\right)vec(B_2) =0,\\
			\end{array}
			\right.
			\label{system}
		\end{equation}
	\end{scriptsize}
	where
	\begin{equation*}
		\left\{
		\begin{array}{l}
			B_0 = A_0,\\
			B_1 = \left[A_1,A_2,A_3\right],\\
			B_2 =  \left[A_{11},A_{12},A_{13},A_{22},A_{23},A_{33}\right],\\
		\end{array}
		\right.
	\end{equation*}
	\begin{equation*}
		P = \left[
		\begin{array}{ccccccccc}
			1 &0 &0 &0 &0 &0 &0 &0 &0 \\
			0 &1/2 &0 &1/2 &0 &0 &0 &0 &0 \\
			0 &0 &1/2 &0 &0 &0 &1/2 &0 &0 \\
			0 &0 &0 &0 &1 &0 &0 &0 &0 \\
			0 &0 &0 &0 &0 &1/2 &0 &1/2 &0 \\
			0 &0 &0 &0 &0 &0 &0 &0 &1 \\
		\end{array}
		\right],
	\end{equation*}
	$P^{\dag}$ is the Moore-Penrose inverse of $P$, $I_K$ is a $K$-order identity matrix, $\otimes$ denotes the Kronecker product, the superscript ``$T$" denotes the transpose operator, and $vec(B)$ is the vectorization operator that stacks the columns of $B$ into a vector.
	\label{thm}
\end{theorem}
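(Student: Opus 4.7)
The plan is to substitute the PDO expression for $\Psi$ into both sides of the equivariance condition (\ref{mequivariance}), match coefficients of the distinct derivatives of $f$, then package the resulting scalar constraints into the matrix equations (\ref{mbase}) and finally into the vectorized linear system (\ref{msystem}).

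First I would expand the left-hand side $\pi'(g)[\Psi[f]](x) = \rho'(g)\sum_\alpha A_\alpha (\partial^\alpha f)(g^{-1}x)$ and the right-hand side $\Psi[\pi(g)f](x) = \sum_\alpha A_\alpha \rho(g)\, \partial^\alpha_x\!\left[f(g^{-1}x)\right]$. Using the chain rule together with orthogonality of $g$ (so $\partial_{x_i}\!\left[f(g^{-1}x)\right] = \sum_k g_{ik}(\partial_k f)(g^{-1}x)$ and analogously $\partial_{x_i x_j}\!\left[f(g^{-1}x)\right] = \sum_{k,l} g_{ik}g_{jl}(\partial_{kl}f)(g^{-1}x)$), I would re-express the RHS as a linear combination of $(\partial^\beta f)(g^{-1}x)$ with $|\beta|\le 2$. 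Matching coefficients of $(\partial_k f)(g^{-1}x)$ yields the zeroth-order identity $\rho'(g)A_0 = A_0\rho(g)$ and the first-order identity $\rho'(g)A_k = \sum_i g_{ik}A_i\rho(g)$, which is exactly $\rho'(g)B_1 = B_1(g\otimes \rho(g))$ after concatenating the $A_i$ into $B_1$. The second-order equalities $\rho'(g)A_{kl} = \sum_{i,j} g_{ik}g_{jl}A_{ij}\rho(g)$ are first written for all nine index pairs via $g\otimes g$, then symmetrized in $(i,j)$ and $(k,l)$ via the projection $P$ and its pseudo-inverse $P^\dag$, yielding $\rho'(g)B_2 = B_2(P(g\otimes g)P^\dag \otimes \rho(g))$.

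Finally I would convert each of the three matrix identities in (\ref{mbase}) into the homogeneous linear form in (\ref{msystem}) by applying the standard identity $vec(AXC) = (C^T\otimes A)\,vec(X)$: each equation $\rho'(g)B = B\,M(g)$ becomes $(I\otimes\rho'(g) - M(g)^T\otimes I)\,vec(B) = 0$. For the converse direction, because the ten distinct derivatives $\{(\partial^\beta f)(g^{-1}x)\}_{|\beta|\le 2}$ are linearly independent as functionals of $f$ (tested on monomials of degree $\le 2$), the coefficient-matching step is necessary as well as sufficient for (\ref{mequivariance}) to hold.

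The main obstacle is the second-order case: the raw chain-rule expansion is indexed by all nine Hessian entries, while $B_2$ only carries the six independent coefficients $A_{11},A_{12},A_{13},A_{22},A_{23},A_{33}$. One must show that the redundant nine-dimensional representation $g\otimes g$ descends correctly to the six-dimensional representation on symmetric $2$-tensors. This is precisely the role of $P$ (symmetrizing and collapsing duplicates $\partial_{ij}=\partial_{ji}$) and $P^\dag$ (lifting uniquely back into the symmetric subspace); verifying that $P(g\otimes g)P^\dag$ is well defined on the symmetric subspace and that the bookkeeping between the $9$-indexed and $6$-indexed forms is lossless is the only delicate step, after which the rest of the argument is a direct coefficient comparison.
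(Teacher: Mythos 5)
Your proposal is correct and follows essentially the same route as the paper's proof: substitute the PDO filter into the equivariance condition, apply the chain rule using orthogonality of $g$, compare coefficients of the (linearly independent) derivatives $(\partial^\beta f)(g^{-1}x)$, handle the nine-to-six reduction of the Hessian coefficients via $P$ and $P^{\dag}$, and vectorize with $vec(AXC)=(C^T\otimes A)vec(X)$. Your explicit justification of the converse direction (linear independence of the derivative functionals) is a small bonus the paper leaves implicit.
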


\begin{proof}
	we substitute (\ref{Psi}) into the equivariance condition Eqn. (\ref{equivariance}) and get that for any $ g\in \mathcal{G}$,
	\begin{align}
		&\rho'(g)\left(A_0+A_1\partial_{x_1}+A_2\partial_{x_2}+A_3\partial_{x_3}+A_{11}\partial_{x_1^2}\right.\notag\\
		&\left.+A_{12}\partial_{x_1x_2}+A_{13}\partial_{x_1x_3}+A_{22}\partial_{x_2^2}+A_{23}\partial_{x_2x_3}\right.\notag\\
		&\left.+A_{33}\partial_{x_3^2}\right)[f]\left(g^{-1}x\right)\notag\\
		=&\left(A_0+A_1\partial_{x_1}+A_2\partial_{x_2}+A_3\partial_{x_3}+A_{11}\partial_{x_1^2}\right.\notag\\
		&\left.+A_{12}\partial_{x_1x_2}+A_{13}\partial_{x_1x_3}+A_{22}\partial_{x_2^2}+A_{23}\partial_{x_2x_3}\right.\notag\\
		&\left.+A_{33}\partial_{x_3^2}\right)\left[\rho(g)f\left(g^{-1}x\right)\right].
		\label{extend}
	\end{align}	
	We denote that $\forall i = 1,2,\cdots,K,$
	\begin{align*}
		\nabla^T \left[f_i\right]=& \left[\partial_{x_1}\left[f_i\right],\partial_{x_2}\left[f_i\right],\partial_{x_3}\left[f_i\right]\right],\\
		\nabla^2 \left[f_i\right]=&\left[
		\begin{array}{ccc}
			\partial_{x_1^2}\left[f_i\right] & \partial_{x_1x_2}\left[f_i\right]& \partial_{x_1x_3}\left[f_i\right]\\
			\partial_{x_1x_2}\left[f_i\right] & \partial_{x_2^2}\left[f_i\right] & \partial_{x_2x_3}\left[f_i\right]\\	 
			\partial_{x_1x_3}\left[f_i\right] & \partial_{x_2x_3}\left[f_i\right] & \partial_{x_3^2}\left[f_i\right]\\	 
		\end{array}
		\right].
	\end{align*}
	Then the left hand side of Eqn. (\ref{extend}) can be written as
	\begin{equation}
		\rho'(g)\left(B_0f\left(g^{-1}x\right)+B_1 V_1+\bar{B}_2\bar{V}_2\right)\label{1lfh},
	\end{equation}
	where
	\begin{align*}
		\bar{B}_2 =& \left[A_{11},\frac{A_{12}}{2},\frac{A_{13}}{2},\frac{A_{12}}{2},A_{22},\frac{A_{23}}{2},\frac{A_{13}}{2},\frac{A_{23}}{2},A_{33}\right],
	\end{align*}
	\begin{small}
		\begin{align*}
			V_1=&vec\left[
			\begin{array}{c}
				\nabla^T \left[f_1\right]\left(g^{-1}x\right)\\
				\nabla^T \left[f_2\right]\left(g^{-1}x\right)\\
				\cdots\\
				\nabla^T\left [f_K\right]\left(g^{-1}x\right)\\
			\end{array}
			\right]\\
			=&
			vec\left[\partial_{x_1}[f]\left(g^{-1}x\right),\partial_{x_2}[f]\left(g^{-1}x\right),\partial_{x_3}[f]\left(g^{-1}x\right)\right],\\
		\end{align*}
	\end{small}
	and
	\begin{small}
		\begin{align*}
			\bar{V}_2 = &vec\left[
			\begin{array}{c}
				vec^T\left(\nabla^2 \left[f_1\right]\left(g^{-1}x\right)\right)\\
				vec^T\left(\nabla^2 \left[f_2\right]\left(g^{-1}x\right)\right)\\
				\cdots\\
				vec^T\left(\nabla^2 \left[f_K\right]\left(g^{-1}x\right)\right)\\
			\end{array}
			\right]\\
			=&vec
			\left[
			\partial_{x_1^2}[f]\left(g^{-1}x\right),\partial_{x_1x_2}[f]\left(g^{-1}x\right),\partial_{x_1x_3}[f]\left(g^{-1}x\right),\right.\\
			&\left.	\partial_{x_1x_2}[f]\left(g^{-1}x\right),\partial_{x_2^2}[f]\left(g^{-1}x\right),\partial_{x_2x_3}[f]\left(g^{-1}x\right),\right.\\
			&\left.\partial_{x_1x_3}[f]\left(g^{-1}x\right),\partial_{x_2x_3}[f]\left(g^{-1}x\right),\partial_{x_3^2}[f]\left(g^{-1}x\right)\right].
		\end{align*}
	\end{small}
	In addition, we have
	\begin{equation*}
		\bar{B}_2=B_2\left(P\otimes I_K\right).
	\end{equation*}
	Then, we denote that
	\begin{small}
		\begin{align*}
			V_2 =&vec
			\left[
			\partial_{x_1^2}[f]\left(g^{-1}x\right),\partial_{x_1x_2}[f]\left(g^{-1}x\right),\partial_{x_1x_3}[f]\left(g^{-1}x\right),\right.\\
			&\left.\partial_{x_2^2}[f]\left(g^{-1}x\right),\partial_{x_2x_3}[f]\left(g^{-1}x\right),\partial_{x_3^2}[f]\left(g^{-1}x\right)\right],
		\end{align*}
	\end{small}
	and we have 
	\begin{equation*}
		\bar{V}_2 = \left(P^{\dag}\otimes I_K\right)V_2,
	\end{equation*}
	where 
	\begin{equation*}
		P^{\dag} = \left[
		\begin{array}{ccccccccc}
			1 &0 &0 &0 &0 &0 \\
			0 &1 &0 &0 &0 &0 \\		
			0 &0 &1 &0 &0 &0 \\
			0 &1 &0 &0 &0 &0 \\
			0 &0 &0 &1 &0 &0 \\
			0 &0 &0 &0 &1 &0 \\
			0 &0 &1 &0 &0 &0 \\
			0 &0 &0 &0 &1 &0 \\
			0 &0 &0 &0 &0 &1 \\						
		\end{array}
		\right],
	\end{equation*}
	and it is easy to verify that $P^{\dag}$ is exactly the Moore-Penrose inverse of $P$. As a result, Eqn. (\ref{1lfh}) can be further written as 
	\begin{small}
		\begin{align*}
			&\rho'(g)\left(B_0f\left(g^{-1}x\right)+B_1 V_1+\bar{B}_2\bar{V}_2\right)\\
			=&\rho'(g)\left(B_0f\left(g^{-1}x\right)+B_1 V_1+B_2\left(P\otimes I_K\right)\left(P^{\dag}\otimes I_K\right)V_2\right)\\
			=&\rho'(g)\left(B_0f\left(g^{-1}x\right)+B_1 V_1+B_2V_2\right).\\
		\end{align*}
	\end{small}
	Also, the right hand side of (\ref{extend}) can be written as \footnote{In the following derivation, we utilize two important properties of Kronecker product: (1) $ vec(MXN)=\left(N^T\otimes M\right)vec(X)$; (2) $\left(M_1\otimes N_1\right)\left(M_2 \otimes N_2\right)=\left(M_1M_2\otimes N_1N_2\right)$. Besides, $g^{-1}=g^T$, because $g$ is essentially an orthogonal matrix.}
	\begin{scriptsize}
		\begin{align*}
			&\left(A_0+A_1\partial_{x_1}+A_2\partial_{x_2}+A_3\partial_{x_3}+A_{11}\partial_{x_1^2}\right.\\
			&\left.+A_{12}\partial_{x_1x_2}+A_{13}\partial_{x_1x_3}+A_{22}\partial_{x_2^2}+A_{23}\partial_{x_2x_3}\right.\notag\\
			&\left.+A_{33}\partial_{x_3^2}\right)\left[\rho(g)f\left(g^{-1}x\right)\right]\\
			=	&\left(A_0\rho(g)+A_1\rho(g)\partial_{x_1}+A_2\rho(g)\partial_{x_2}+A_3\rho(g)\partial_{x_3}\right.\\
			&\left.+A_{11}\rho(g)\partial_{x_1^2}+A_{12}\rho(g)\partial_{x_1x_2}+A_{13}\rho(g)\partial_{x_1x_3}\right.\\
			&\left.+A_{22}\rho(g)\partial_{x_2^2}+A_{23}\rho(g)\partial_{x_2x_3}+A_{33}\rho(g)\partial_{x_3^2}\right)\left[f\left(g^{-1}x\right)\right]\\
			=&B_0\rho(g)f\left(g^{-1}x\right)+B_1\left(I_3\otimes \rho(g)\right) vec\left[
			\begin{array}{c}
				\nabla^T \left[f_1\left(g^{-1}x\right)\right]\\
				\nabla^T \left[f_2\left(g^{-1}x\right)\right]\\
				\cdots\\
				\nabla^T \left[f_K\left(g^{-1}x\right)\right]\\
			\end{array}
			\right]\\
			&+\bar{B}_2\left(I_9\otimes\rho(g)\right)vec\left[
			\begin{array}{c}
				vec^T\left(\nabla^2 \left[f_1\left(g^{-1}x\right)\right]\right)\\
				vec^T\left(\nabla^2 \left[f_2\left(g^{-1}x\right)\right]\right)\\
				\cdots\\
				vec^T\left(\nabla^2 \left[f_K\left(g^{-1}x\right)\right]\right)\\
			\end{array}
			\right]\\
			=	&B_0\rho(g)f\left(g^{-1}x\right)+B_1\left(I_3\otimes \rho(g)\right) vec\left[
			\begin{array}{c}
				\nabla^T \left[f_1\right]\left(g^{-1}x\right)g^{-1}\\
				\nabla^T \left[f_2\right]\left(g^{-1}x\right)g^{-1}\\
				\cdots\\
				\nabla^T \left[f_K\right]\left(g^{-1}x\right)g^{-1}\\
			\end{array}
			\right]\\
			&+\bar{B}_2\left(I_9\otimes\rho(g)\right)vec\left[
			\begin{array}{c}
				vec^T\left(g\nabla^2 \left[f_1\right]\left(g^{-1}x\right)g^{-1}\right)\\
				vec^T\left(g\nabla^2 \left[f_2\right]\left(g^{-1}x\right)g^{-1}\right)\\
				\cdots\\
				vec^T\left(g\nabla^2 \left[f_K\right]\left(g^{-1}x\right)g^{-1}\right)\\
			\end{array}
			\right]\\
			=	&B_0\rho(g)f\left(g^{-1}x\right)\\
			&+B_1\left(I_3\otimes \rho(g)\right) vec\left[I_K\left(
			\begin{array}{c}
				\nabla^T \left[f_1\right]\left(g^{-1}x\right)\\
				\nabla^T \left[f_2\right]\left(g^{-1}x\right)\\
				\cdots\\
				\nabla^T \left[f_K\right]\left(g^{-1}x\right)\\
			\end{array}
			\right)g^{-1}
			\right]\\		
			&+\bar{B}_2(I_9\otimes\rho(g))vec\left[
			\begin{array}{c}
				((g\otimes g)vec(\nabla^2 [f_1](g^{-1}x)))^T\\
				((g\otimes g)vec(\nabla^2 [f_2](g^{-1}x)))^T\\
				\cdots\\
				((g\otimes g)vec(\nabla^2 [f_K](g^{-1}x)))^T\\
			\end{array}
			\right]\\
			=	&B_0\rho(g)f(g^{-1}x)+B_1(I_3\otimes \rho(g)) (g\otimes I_K) vec\left[
			\begin{array}{c}
				\nabla^T [f_1](g^{-1}x)\\
				\nabla^T [f_2](g^{-1}x)\\
				\cdots\\
				\nabla^T [f_K](g^{-1}x)\\
			\end{array}
			\right]\\
			&+\bar{B}_2(I_9\otimes\rho(g))vec\left[I_K\left(
			\begin{array}{c}
				vec^T\left(\nabla^2 [f_1]\left(g^{-1}x\right)\right)\\
				vec^T\left(\nabla^2 [f_2]\left(g^{-1}x\right)\right)\\
				\cdots\\
				vec^T\left(\nabla^2 [f_K]\left(g^{-1}x\right)\right)\\
			\end{array}
			\right)
			(g\otimes g)^T
			\right]\\
			=	&B_0\rho(g)f(g^{-1}x)+B_1(I_3\otimes \rho(g)) (g\otimes I_K) vec\left[
			\begin{array}{c}
				\nabla^T [f_1]\left(g^{-1}x\right)\\
				\nabla^T [f_2]\left(g^{-1}x\right)\\
				\cdots\\
				\nabla^T [f_K]\left(g^{-1}x\right)\\
			\end{array}
			\right]\\
			&+\bar{B}_2(I_9\otimes\rho(g))(g\otimes g \otimes I_K)vec\left[
			\begin{array}{c}
				vec^T\left(\nabla^2 [f_1]\left(g^{-1}x\right)\right)\\
				vec^T\left(\nabla^2 [f_2]\left(g^{-1}x\right)\right)\\
				\cdots\\
				vec^T\left(\nabla^2 [f_K]\left(g^{-1}x\right)\right)\\
			\end{array}
			\right]\\
			=	&B_0\rho(g)f\left(g^{-1}x\right)+B_1\left(g\otimes \rho(g)\right)V_1\\
			&+B_2\left(P\otimes I_K\right)\left(I_9\otimes\rho(g)\right)\left(g\otimes g \otimes I_K\right)\left(P^{\dag}\otimes I_K\right)V_2\\
			=	&B_0\rho(g)f\left(g^{-1}x\right)+B_1\left(g\otimes \rho(g)\right)V_1\\
			&+B_2\left(P\left(g\otimes g\right)P^{\dag}\otimes\rho(g)\right)V_2.\\
		\end{align*}
	\end{scriptsize}
	As a result, Eqn. (\ref{extend}) can be deduced to
	\begin{scriptsize}
		\begin{align*}
			&\rho'(g)B_0f\left(g^{-1}x\right)+\rho'(g)B_1 V_1+\rho'(g)B_2V_2\\
			=&B_0\rho(g)f\left(g^{-1}x\right)+B_1\left(g\otimes \rho(g)\right)V_1+B_2\left(P\left(g\otimes g\right)P^{\dag}\otimes\rho(g)\right)V_2
		\end{align*}
	\end{scriptsize}
	Finally, we can use the coefficient comparison method to derive that Eqn. (\ref{extend}) is satisfied if and only if
	\begin{equation*} 
		\left\{
		\begin{array}{l} 
			\rho'(g)B_0 = B_0\rho(g),\\
			\rho'(g)B_1 =B_1\left(g\otimes \rho(g)\right),\\ 
			\rho'(g)B_2 =B_2\left(P\left(g\otimes g\right)P^{\dag}\otimes\rho(g)\right),
		\end{array}
		\right.
	\end{equation*} 
	i.e., 
	\begin{scriptsize}
		\begin{equation*}
			\left\{
			\renewcommand{\arraystretch}{2.0}
			\begin{array}{l}
				\left(I_K \otimes\rho'(g)-\rho(g)^T\otimes I_{K'}\right)vec\left(B_0\right) =0,\\
				\left(I_{3K} \otimes\rho'(g)-\left(g\otimes \rho(g)\right)^T\otimes I_{K'}\right)vec\left(B_1\right) =0,\\
				\left(I_{6K} \otimes\rho'(g)-\left(P(g\otimes g)P^{\dag}\otimes\rho(g)\right)^T\otimes I_{K'}\right)vec\left(B_2\right) =0,\\
			\end{array}
			\right.
			\label{system}
		\end{equation*}
	\end{scriptsize}
	\label{proof1}
\end{proof}

\begin{theorem}
	When $\mathcal{G}$ is a discrete group, if Eqn. (\ref{equivariance}) or (\ref{base}) is satisfied for the generators of $\mathcal{G}$, then $\forall g\in \mathcal{G}$, Eqn. (\ref{equivariance}) is satisfied.
	\label{thmb2}
\end{theorem}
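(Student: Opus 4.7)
The plan is to show that the set
\[
H \;=\; \{\, g \in \mathcal{G} \,:\, \pi'(g)[\Psi[f]] = \Psi[\pi(g)[f]] \text{ for every } f \,\}
\]
is a subgroup of $\mathcal{G}$. Since $H$ contains the generators by hypothesis, and since every element of $\mathcal{G}$ is (by definition of ``generators'') a finite word in generators and their inverses, closure of $H$ under the group operation and inversion will force $H = \mathcal{G}$, which is exactly the conclusion.

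First I would check closure under composition. The key point is that $\pi$ and $\pi'$ are genuine group actions, i.e.\ $\pi(g_1 g_2) = \pi(g_1)\pi(g_2)$ and $\pi'(g_1 g_2) = \pi'(g_1)\pi'(g_2)$, which follows from $\rho$ and $\rho'$ being group representations (as noted right below Eqn.~(\ref{rho}) in the main text). Given $g_1, g_2 \in H$, I would chain the equivariance identity twice:
\[
\Psi\!\left[\pi(g_1 g_2)[f]\right] = \Psi\!\left[\pi(g_1)\pi(g_2)[f]\right] = \pi'(g_1)\!\left[\Psi[\pi(g_2)[f]]\right] = \pi'(g_1)\pi'(g_2)\!\left[\Psi[f]\right] = \pi'(g_1 g_2)\!\left[\Psi[f]\right],
\]
so $g_1 g_2 \in H$.

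Next I would verify that $H$ contains the identity (immediate, since $\pi(e)$ and $\pi'(e)$ are the identity operators) and that $H$ is closed under inversion. For the latter, starting from $\pi'(g)[\Psi[f]] = \Psi[\pi(g)[f]]$, I substitute $f \mapsto \pi(g^{-1})[\tilde f]$ and apply $\pi'(g^{-1})$ on the left of both sides; using $\pi(g)\pi(g^{-1}) = \pi(e) = \mathrm{id}$ and the analogous identity for $\pi'$, the two $\pi/\pi'$ operators on the $g$ side collapse and one obtains $\pi'(g^{-1})[\Psi[\tilde f]] = \Psi[\pi(g^{-1})[\tilde f]]$, i.e.\ $g^{-1}\in H$. (In the discrete/finite case this step is even redundant, since $g^{-1}$ is already a positive power of $g$, but including it keeps the argument uniform.)

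With $H$ a subgroup containing a generating set of $\mathcal{G}$, we conclude $H = \mathcal{G}$, which is the desired statement. The equivalence between Eqn.~(\ref{equivariance}) on generators and the coefficient condition (\ref{base}) on generators is inherited directly from Theorem~\ref{thm}, so the same argument works if one instead starts from (\ref{base}). I do not anticipate a genuine obstacle here: the proof is essentially a bookkeeping exercise that uses only the homomorphism property of $\pi$ and $\pi'$, and the one subtle point to keep track of is simply the order of composition when chaining the equivariance identity.
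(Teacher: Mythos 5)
Your proposal is correct and takes essentially the same approach as the paper: both chain the equivariance identity across a word in the generators using the homomorphism property $\pi(g_1g_2)=\pi(g_1)\pi(g_2)$ and likewise for $\pi'$. Your packaging of the argument as ``the set of equivariant elements is a subgroup'' is just a cleaner formulation, and in fact treats inverse generators slightly more carefully than the paper's bare assertion for all $m\in\mathbb{Z}$.
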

\begin{proof}
	Firstly, for any given $g\in \mathcal{G}$, it is easy to deduce that Eqn. (\ref{equivariance}) is equivalent to Eqn. (\ref{base}) from Theorem \ref{thm}. Suppose that $g_1$ and $g_2$ are the generators of $\mathcal{G}$, we have
	\begin{align*}
		\pi'\left(g_1\right)[\Psi[f]]=&\Psi\left[\pi\left(g_1\right)[f]\right],\\
		\quad \pi'\left(g_2\right)[\Psi[f]]=&\Psi\left[\pi\left(g_2\right)[f]\right].
	\end{align*}
	As a result, $\forall m\in \mathbb{Z}$,
	\begin{align*}
		\pi'\left(g_1^m\right)\left[\Psi[f]\right]
		=&\pi'\left(g_1\right)^m\left[\Psi[f]\right]\\
		=&\pi'\left(g_1\right)^{m-1}\pi'\left(g_1\right)\left[\Psi[f]\right]\\
		=&\pi'\left(g_1\right)^{m-1}\Psi\left[\pi(g_1)[f]\right],\\
		=&\Psi\left[\pi(g_1)^m[f]\right],\\
		=&\Psi\left[\pi\left(g_1^m\right)[f]\right].\\
	\end{align*}
	Analogously, we have
	\begin{equation*}
		\pi'\left(g_2^m\right)\left[\Psi[f]\right]=\Psi\left[\pi\left(g_2^m\right)[f]\right].
	\end{equation*}
	Actually, any $g\in \mathcal{G}$ can be generated by $g_1$ and $g_2$, i.e., it can be written as $g=g_1^{m_1}g_2^{m_2}\cdots g_1^{m_{l-1}}g_2^{m_l}$, where $m_1,m_2,\cdots,m_l\in \mathbb{Z}$. Consequently, we have
	\begin{align*}
		&\pi'(g)[\Psi[f]]\\
		=&\pi'\left(g_1^{m_1}g_2^{m_2}\cdots g_1^{m_{l-1}}g_2^{m_l}\right)[\Psi[f]]\\
		=&\pi'\left(g_1^{m_1}\right)\pi'\left(g_2^{m_2}\right)\cdots \pi'\left(g_1^{m_{l-1}}\right)\pi'\left(g_2^{m_l}\right)[\Psi[f]]\\
		=&\Psi\left[\pi\left(g_1^{m_1}\right)\pi\left(g_2^{m_2}\right)\cdots \pi\left(g_1^{m_{l-1}}\right)\pi\left(g_2^{m_l}\right)[f]\right]\\
		=&\Psi\left[\pi\left(g_1^{m_1}g_2^{m_2}\cdots g_1^{m_{l-1}}g_2^{m_l}\right)[f]\right]\\
		=&\Psi[\pi(g)[f]].
	\end{align*}
	\label{proof2}
\end{proof}

\begin{theorem}
	When $\mathcal{G}=SO(3)$, if Eqn. (\ref{equivariance}) or (\ref{base}) is satisfied for the approximated generator $g_1=Z(1)$ and $g_2 = Y(1)$, where
	\begin{scriptsize}
		\begin{align*}
			Z(\alpha)=\left[
			\begin{array}{ccc}
				\cos\alpha & -\sin\alpha & 0\\
				\sin\alpha & \cos\alpha & 0\\
				0 & 0 & 1\\
			\end{array}
			\right],\, Y(\beta)=\left[
			\begin{array}{ccc}
				\cos\beta & 0  & \sin\beta\\
				0  & 1  & 0\\
				-\sin\beta & 0 & \cos\beta
			\end{array}
			\right],
		\end{align*}
	\end{scriptsize}
	then $\forall g\in \mathcal{G}$, Eqn. (\ref{equivariance}) is satisfied.
	\label{thm3}
\end{theorem}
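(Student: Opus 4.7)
My plan is to combine three ingredients: an iteration argument that propagates equivariance from $g_1, g_2$ to the subgroup they generate, a density argument that shows this subgroup is dense in $SO(3)$, and a continuity argument that extends equivariance from a dense subset to all of $SO(3)$.

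First, I would observe that the iteration in the proof of Theorem~\ref{thmb2} does not rely on discreteness of $\mathcal{G}$: using only $\pi(gh) = \pi(g)\pi(h)$ and the analogous identity for $\pi'$, the equivariance condition (\ref{equivariance}) is preserved under products and inverses. Consequently, if (\ref{equivariance}) holds for $g_1 = Z(1)$ and $g_2 = Y(1)$, it holds for every element of the abstract subgroup $H = \langle Z(1), Y(1) \rangle \leq SO(3)$.

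Next, I would show that $H$ is dense in $SO(3)$. Because $1/(2\pi)$ is irrational, the integer multiples of $1$ are dense modulo $2\pi$, so $\{Z(1)^n : n \in \mathbb{Z}\}$ is dense in the one-parameter subgroup $\{Z(\alpha) : \alpha \in \mathbb{R}\}$, and likewise $\{Y(1)^n\}$ is dense in $\{Y(\beta)\}$. Hence the closure $\overline{H}$ is a closed subgroup containing both one-parameter subgroups. The Euler-angle decomposition $g = Z(\alpha_1) Y(\beta) Z(\alpha_2)$ shows that every $g \in SO(3)$ is a product of elements of these two subgroups, so $\overline{H} = SO(3)$.

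Finally, I would invoke continuity. For any fixed smooth $f$, the map $g \mapsto \pi'(g)[\Psi[f]] - \Psi[\pi(g)[f]]$ is continuous in $g$ because the representations $\rho, \rho'$ are continuous and $\Psi$ is a fixed linear combination of partial differential operators acting on a smooth function that varies continuously with $g$. This difference vanishes on the dense subset $H$, and hence on all of $SO(3)$, yielding (\ref{equivariance}) for every $g \in SO(3)$. Equivalently, since the constraint (\ref{base}) depends continuously on the entries of $g$ and on $\rho(g),\rho'(g)$, satisfying it on the dense set $H$ forces it on the closure.

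The main obstacle is the density step: one must verify that $\langle Z(1), Y(1) \rangle$ really is dense in $SO(3)$, which combines the irrationality of $1$ modulo $2\pi$ (to get dense orbits inside each one-parameter subgroup) with the Euler-angle fact that $\{Z(\alpha)\}$ and $\{Y(\beta)\}$ generate $SO(3)$. Everything else is a routine transfer of Theorem~\ref{thmb2}'s discrete iteration scheme combined with standard continuity of representations.
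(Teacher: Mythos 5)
Your proposal is correct and follows essentially the same route as the paper: iterate equivariance to integer powers of $Z(1)$ and $Y(1)$, use irrationality of $1/(2\pi)$ to get density in the two one-parameter subgroups, pass to the limit by continuity, and finish with the ZYZ Euler decomposition. The only cosmetic difference is that you organize the argument as ``dense subgroup plus one continuity step at the end,'' whereas the paper takes limits within each one-parameter subgroup first and then multiplies the three Euler factors; you are in fact slightly more explicit than the paper about why the approximating sequence $Z(n_k)\to Z(\alpha)$ exists and why the limit can be passed through $\Psi$ and $\pi$.
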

\begin{proof}
	$\forall \alpha\in [0,2\pi)$, there exists a sequence of positive integers $\{n_k\in \mathbb{Z}^{+}\}$, such that 
	\begin{equation*}
		\lim_{k\to +\infty} Z\left(n_k\right) = Z(\alpha).
	\end{equation*}
	As a result, since Eqn. (\ref{equivariance}) (or Eqn.(\ref{base})) is satisfied for $g_1$, i.e.,
	\begin{equation*}
		\pi'\left(g_1\right)[\Psi[f]]=\Psi\left[\pi(g_1)[f]\right],
	\end{equation*}
	and $Z\left(n_k\right)=g_1^{n_k}$, we have that
	\begin{align*}
		\pi'\left(Z\left(n_k\right)\right)[\Psi[f]]=&\pi'\left(g_1^{n_k}\right)[\Psi[f]]\\
		=&\pi'\left(g_1\right)^{n_k}\left[\Psi[f]\right]\\
		=&\pi'\left(g_1\right)^{n_k-1}\pi'\left(g_1\right)\left[\Psi[f]\right]\\
		=&\pi'\left(g_1\right)^{n_k-1}\Psi\left[\pi(g_1)[f]\right],\\
		=&\Psi\left[\pi\left(g_1^{n_k}\right)[f]\right]\\
		=&\Psi\left[\pi\left(Z(n_k)\right)[f]\right].
	\end{align*}
	As a result, we have that
	\begin{align*}
		\pi'\left(Z(\alpha)\right)[\Psi[f]]=&\lim_{k\to +\infty}\pi'\left(Z(n_k)\right)[\Psi[f]]\\
		=&\lim_{k\to +\infty}\Psi[\pi\left(Z(n_k)\right)[f]]\\
		=&\Psi\left[\pi\left(Z(\alpha)\right)[f]\right].
	\end{align*}
	Analogously, we can prove that $\forall \beta\in [0,\pi] $, 
	\begin{equation*}
		\pi'\left(Y(\beta)\right)[\Psi[f]]=\Psi\left[\pi\left(Y(\beta)\right)[f]\right].	
	\end{equation*}
	Actually, any $g\in SO(3)$ can be decomposed using the ZYZ Euler parameterization, i.e.,
	\begin{equation*}
		g = Z\left(\alpha_g\right)Y\left(\beta_g\right)Z\left(\gamma_g\right),
	\end{equation*}
	where $\alpha_g, \gamma_g\in [0,2\pi)$ and $\beta_g \in [0,\pi]$. Consequently, it is easy to deduce that
	\begin{align*}
		\pi'(g)[\Psi[f]]=&\pi'(Z\left(\alpha_g)\right)\pi'\left(Y(\beta_g)\right)\pi'\left(Z(\gamma_g\right))[\Psi[f]]\\
		=&\Psi\left[\pi\left(Z\left(\alpha_g\right)\right)\pi\left(Y\left(\beta_g\right)\right)\pi\left(Z\left(\gamma_g\right)\right)[f]\right]\\
		=&\Psi\left[\pi\left(Z\left(\alpha_g\right)Y\left(\beta_g\right)Z\left(\gamma_g\right)\right)[f]\right]\\
		=&\Psi\left[\pi(g)[f]\right].
	\end{align*}
\end{proof}

\section{The Schema and the Generators of Different Groups\label{appc}}
See Figure \ref{polyhedron} and Table \ref{generators}.
\begin{figure*}[t]
	\centering
	\includegraphics[width=0.9\textwidth]{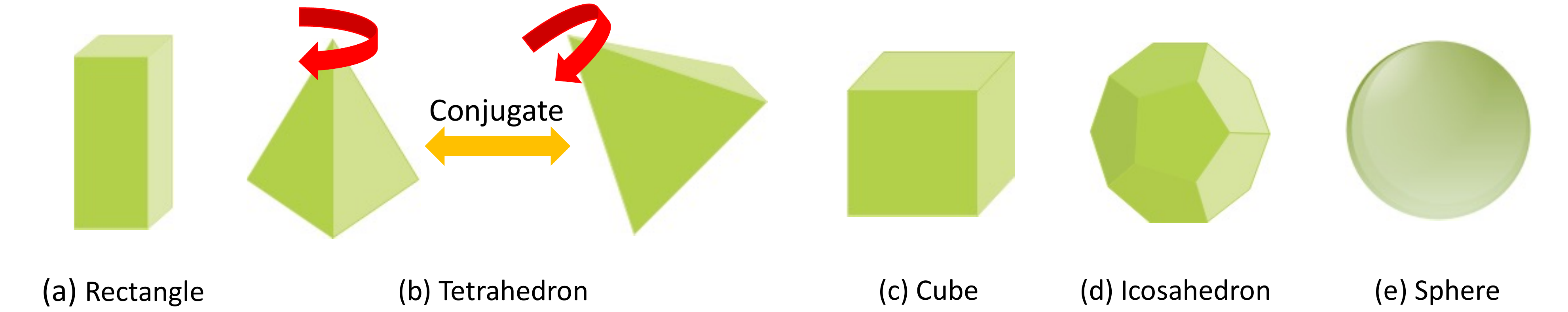} 
	\caption{3D rotation groups $\mathcal{V},\mathcal{T},\mathcal{O},\mathcal{I}$ and $SO(3)$ are composed of all the rotational symmetries of (a) rectangle, (b) tetrahedron, (c) cube, (d) icosahedron, and (e) sphere, respectively. (b) Two tetrahedron groups are conjugate, as they are composed of the rotational symmetries of a tetrahedron and a rotated tetrahedron, respectively.}
	\label{polyhedron}
\end{figure*}

\begin{table}[t]
	\caption{The generators and sizes of discrete groups $\mathcal{V},\mathcal{T},\mathcal{O}$ and $\mathcal{I}$, where $\phi=(1+\sqrt{5})/2$.}
	\centering
	\scriptsize
	\begin{tabular}{ccc}
		\toprule
		Group   & Generators & Size \\
		\midrule
		$\mathcal{V}$  &$\left[
		\begin{array}{p{0.5cm}<{\centering}p{0.5cm}<{\centering}p{0.5cm}<{\centering}}
			-1 & 0 & 0\\
			0 & -1 & 0\\
			0 & 0 & 1\\
		\end{array}	
		\right],
		\left[
		\begin{array}{p{0.5cm}<{\centering}p{0.5cm}<{\centering}p{0.5cm}<{\centering}}
			1 & 0 & 0\\
			0 & -1 & 0\\
			0 & 0 & -1\\
		\end{array}	
		\right]$ &  4\\
		\hline
		$\mathcal{T}$ &	$ \left[
		\begin{array}{p{0.5cm}<{\centering}p{0.5cm}<{\centering}p{0.5cm}<{\centering}}
			0 & 0 & 1\\
			1 & 0 & 0\\
			0 & 1 & 0\\
		\end{array}	
		\right],
		\left[
		\begin{array}{p{0.5cm}<{\centering}p{0.5cm}<{\centering}p{0.5cm}<{\centering}}
			-1 & 0 & 0\\
			0 & -1 & 0\\
			0 & 0 & 1\\ 
		\end{array}	
		\right]$ & 12\\
		\hline
		$\mathcal{O}$ &$ \left[
		\begin{array}{p{0.5cm}<{\centering}p{0.5cm}<{\centering}p{0.5cm}<{\centering}}
			0 & -1 & 0\\ 
			1 & 0 & 0\\
			0 & 0 & 1\\
		\end{array}
		\right],
		\left[
		\begin{array}{p{0.5cm}<{\centering}p{0.5cm}<{\centering}p{0.5cm}<{\centering}}
			0 & 0 & 1\\
			0 & 1 & 0\\
			-1 & 0 & 0\\ 
		\end{array}	
		\right]$  & 24 \\
		\hline
		$\mathcal{I}$ &	$ \left[
		\begin{array}{p{0.5cm}<{\centering}p{0.5cm}<{\centering}p{0.5cm}<{\centering}}
			-1 & 0 & 0\\
			0 & -1 & 0\\
			0 & 0 & 1\\
		\end{array}	
		\right],
		\left[
		\begin{array}{p{0.5cm}<{\centering}p{0.5cm}<{\centering}p{0.5cm}<{\centering}}
			$\frac{1-\phi}{2}$ & $\frac{\phi}{2}$ & $-\frac{1}{2}$\\
			$-\frac{\phi}{2}$ & $-\frac{1}{2}$ & $\frac{1-\phi}{2}$\\
			$-\frac{1}{2}$ & $\frac{\phi-1}{2}$ & $\frac{\phi}{2}$\\ 
		\end{array}	
		\right]$  & 60 \\
		\bottomrule
	\end{tabular}
	\label{generators}
\end{table}

\section{Implementation\label{appd}}
\subsection{Finite Difference (FD)\label{appd1}}
The volumetric data/feature $\bm{F}$ can be viewed as a sample from a continuous function $f$ defined on $\mathbb{R}^3$. The grid size along $x_1$- , $x_2$- and $x_3$-axis are $h_1,h_2$ and $h_3$, respectively, and the grid diameter $\rho=\sqrt{h_1^2+h_2^2+h_3^2}$. When using FD for discretization, we have that $\forall i\in S_x$,
\begin{equation*}
	\partial_i [f]= u_i \ast \bm{F} + O\left(\rho^2\right),
\end{equation*}
where $S_x=\{x_1,x_2,x_3,x_1^2,x_2^2,x_3^2,x_1x_2,x_1x_3,x_2x_3\}$, and
\begin{equation*}
	\left\{
	\renewcommand{\arraystretch}{1.6}
	\begin{array}{l}
		u_{x_1}(:,0,0)=\frac{1}{h_1}\left[
		\begin{array}{ccc}
			-1/2 & 0 & 1/2\\
		\end{array}	
		\right],\\
		u_{x_2}(0,:,0)=\frac{1}{h_2}\left[
		\begin{array}{ccc}
			-1/2 & 0 & 1/2\\
		\end{array}	
		\right],\\
		u_{x_3}(0,0,:)=\frac{1}{h_3}\left[
		\begin{array}{ccc}
			-1/2 & 0 & 1/2\\
		\end{array}	
		\right],\\
		u_{x_1^2}(:,0,0)=\frac{1}{h_1^2}\left[
		\begin{array}{ccc}
			1 & -2 & 1\\
		\end{array}	
		\right],\\
		u_{x_2^2}(0,:,0)=\frac{1}{h_2^2}\left[
		\begin{array}{ccc}
			1 & -2 & 1\\
		\end{array}	
		\right],\\
		u_{x_3^2}(0,:,0)=\frac{1}{h_3^2}\left[
		\begin{array}{ccc}
			1 & -2 & 1\\
		\end{array}	
		\right],\\
		u_{x_1x_2}(:,:,0)=\frac{1}{h_1h_2}\left[
		\begin{array}{ccc}
			-1/4 & 0 & 1/4\\
			0 & 0 & 0\\
			1/4 & 0 & -1/4\\ 
		\end{array}	
		\right].\\
		u_{x_1x_3}(:,0,:)=\frac{1}{h_1h_3}\left[
		\begin{array}{ccc}
			-1/4 & 0 & 1/4\\
			0 & 0 & 0\\
			1/4 & 0 & -1/4\\ 
		\end{array}	
		\right],\\
		u_{x_2x_3}(0,:,:)=\frac{1}{h_2h_3}\left[
		\begin{array}{ccc}
			-1/4 & 0 & 1/4\\
			0 & 0 & 0\\
			1/4 & 0 & -1/4\\ 
		\end{array}	
		\right].\\
	\end{array}
	\right.
\end{equation*}
We only show some elements of each convolution filter for ease of presentation, and other elements are all zeros. 

Particularly, for regular 3D grids, if we discretize a $\mathcal{G}$-steerable mapping $\Psi$ using FD, where $\mathcal{G}$ includes the cubic group $\mathcal{O}$ as a subset, i.e., $\mathcal{O}\leq \mathcal{G}$, then discretized $\hat \Psi$ can preserve exact equivariance over $\mathcal{O}$ in the discrete domain. Formally, we have the following theorem.
\begin{theorem}
	If $\Psi$ is equivariant over $\mathcal{G}$, where $\mathcal{O}\leq \mathcal{G}$, then $\hat\Psi$ is exactly equivariant over $\mathcal{O}$, i.e., $\forall g\in \mathcal{O}$, we have
	\begin{equation}
		\pi'(g)\left[\hat\Psi[\bm{F}]\right]=\hat\Psi\left[\pi(g)\bm{F}\right],\label{dequivariance}
	\end{equation}
	where the volumetric data/feature $\bm{F}$ is defined on regular 3D grids, $h_1=h_2=h_3$, and
	\begin{equation*}
		\left\{
		\begin{array}{l}
			[\pi(g)\bm{F}](x) = \rho(g)\bm{F}\left(g^{-1}x\right),\\
			\left[\pi'(g)\bm{F}\right](x) = \rho'(g)\bm{F}\left(g^{-1}x\right).\\
		\end{array}
		\right.
		\label{rho}
	\end{equation*}
\label{thd1}
\end{theorem}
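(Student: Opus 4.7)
\emph{Plan.} The key observation is that a regular cubic grid is \emph{exactly} invariant under the cubic group $\mathcal{O}$ — unlike, say, under $\mathcal{I}$ or a generic element of $SO(3)$. Consequently the FD stencils defined on such a grid should satisfy a discrete analogue of the continuous chain rule \emph{exactly} for every $g\in\mathcal{O}$, not merely to quadratic precision. Once this discrete chain rule is available, the algebraic derivation used in the proof of Theorem~\ref{thm} can be replayed line-for-line with $\partial_i$ replaced by $u_i\ast$, and the coefficient constraints (\ref{base}) — which hold for every $g\in\mathcal{G}\supseteq\mathcal{O}$ by hypothesis — force the discrete equivariance (\ref{dequivariance}) for every $g\in\mathcal{O}$.

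\emph{Key lemma.} I would first establish the following identities on a regular cubic grid ($h_1=h_2=h_3$): for every $g\in\mathcal{O}$ and every volumetric feature $F$,
\begin{align*}
(u_{x_j}\ast F_g)(x)&=\sum_{k}g_{jk}\,(u_{x_k}\ast F)(g^{-1}x),\\
(u_{x_jx_l}\ast F_g)(x)&=\sum_{k,m}g_{jk}g_{lm}\,(u_{x_kx_m}\ast F)(g^{-1}x),
\end{align*}
where $F_g(x)=F(g^{-1}x)$. Both are proved by a change of variables in the defining correlation sum. The point is that any $g\in\mathcal{O}$ permutes the six axis neighbours $\{\pm h e_k\}$ and the twelve in-plane diagonal neighbours $\{\pm h(e_k\pm e_m)\}$ bijectively onto themselves, with signs governed by the entries of $g$; because every such permutation moves a nonzero stencil entry onto another nonzero stencil entry of matching magnitude, the sum collapses into the bilinear combination on the right-hand side. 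By Theorem~\ref{thmb2} it suffices to verify these identities at the two generators of $\mathcal{O}$ listed in Table~\ref{generators}.

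\emph{Transporting the continuous proof.} With the lemma in hand, I would expand $\hat\Psi[\pi(g)F](x)=\hat\Psi[\rho(g)F_g](x)$ using the stacks $B_0,B_1,B_2$ and the discrete gradient and Hessian stencil vectors $U_1[F],U_2[F]$. The lemma pulls each spatial rotation out of the stencils, while $\rho(g)$ commutes with $u_i\ast$ because it acts only on the channel index. The outcome is precisely the Kronecker-product expressions $B_0\rho(g)$, $B_1(g\otimes\rho(g))$ and $B_2(P(g\otimes g)P^{\dag}\otimes\rho(g))$ that appeared in the derivation of (\ref{base}). Invoking (\ref{base}) for this particular $g\in\mathcal{O}$ then converts the right-hand side into $\rho'(g)[\hat\Psi[F]](g^{-1}x)$, which is the left-hand side of (\ref{dequivariance}).

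\emph{Main obstacle.} The only real work lies in the lemma, and within it the second-order case: one must check that the diagonal $\pm 1/(4h^2)$ pattern of the mixed-derivative stencils rearranges correctly under axis permutations combined with sign flips, so that the result realizes the bilinear $g\otimes g$ action rather than something weaker. Reducing to the two generators makes this a finite verification, and the regularity assumption $h_1=h_2=h_3$ is used essentially here: for non-cubic grid spacings the stencil weights on different axes would no longer match, and the lemma would fail. Once the lemma is proved, everything else is a purely formal re-execution of the continuous derivation, introducing no new constraints on $\Psi$.
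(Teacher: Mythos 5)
Your proposal is correct and follows essentially the same route as the paper's proof: establish the exact discrete chain rule $\hat\nabla^T[\bm{F}_i(g^{-1}x)]=\hat\nabla^T[\bm{F}_i](g^{-1}x)g^{-1}$ and $\hat\nabla^2[\bm{F}_i(g^{-1}x)]=g\hat\nabla^2[\bm{F}_i](g^{-1}x)g^{-1}$ for $g\in\mathcal{O}$ on a cubic grid, then replay the algebra of Theorem~\ref{thm} with $\partial_i$ replaced by $u_i\ast$ and invoke the constraints (\ref{base}), which hold on $\mathcal{O}\leq\mathcal{G}$. In fact you supply more detail than the paper does for the key lemma (which the paper merely states "can be verified"), including the valid reduction to generators and the correct identification of where $h_1=h_2=h_3$ is used.
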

\begin{proof}
	The proof is mostly following the proof of Theorem \ref{thm}, and most symbols and procedures are similar for ease of understanding. Specifically, Eqn. (\ref{dequivariance}) can be rewritten as $\forall g\in \mathcal{O}$,
	\begin{align}
		&\rho'(g)\left(A_0+A_1u_{x_1}+A_2u_{x_2}+A_3u_{x_3}+A_{11}u_{x_1^2}\right.\notag\\
		&\left.+A_{12}u_{x_1x_2}+A_{13}u_{x_1x_3}+A_{22}u_{x_2^2}+A_{23}u_{x_2x_3}\right.\notag\\
		&\left.+A_{33}u_{x_3^2}\right)\ast\bm{F}\left(g^{-1}x\right)\notag\\
		=&\left(A_0+A_1u_{x_1}+A_2u_{x_2}+A_3u_{x_3}+A_{11}u_{x_1^2}\right.\notag\\
		&\left.+A_{12}u_{x_1x_2}+A_{13}u_{x_1x_3}+A_{22}u_{x_2^2}+A_{23}u_{x_2x_3}\right.\notag\\
		&\left.+A_{33}u_{x_3^2}\right)\ast\left[\rho(g)\bm{F}\left(g^{-1}x\right)\right].
		\label{dextend}
	\end{align}	
	We denote that $\forall i = 1,2,\cdots,K,$
	\begin{align*}
		\hat\nabla^T \left[\bm{F}_i\right]=& \left[u_{x_1}*\bm{F}_i,u_{x_2}\ast\bm{F}_i,u_{x_3}\ast\bm{F}_i\right],\\
		\hat\nabla^2 \left[\bm{F}_i\right]=&\left[
		\begin{array}{ccc}
			u_{x_1^2}\ast\bm{F}_i & u_{x_1x_2}\ast\bm{F}_i& u_{x_1x_3}\ast\bm{F}_i\\
			u_{x_1x_2}\ast\bm{F}_i & u_{x_2^2}\ast\bm{F}_i & u_{x_2x_3}\ast\bm{F}_i\\	 
			u_{x_1x_3}\ast\bm{F}_i & u_{x_2x_3}\ast\bm{F}_i & u_{x_3^2}\ast\bm{F}_i\\	 
		\end{array}
		\right].
	\end{align*}
	Then the left hand side of Eqn. (\ref{dextend}) can be written as
	\begin{equation}
		\rho'(g)\left(B_0\bm{F}\left(g^{-1}x\right)+B_1 \hat V_1+\bar{B}_2\hat{\bar{V}}_2\right)\label{lfh},
	\end{equation}
	where
	\begin{small}
		\begin{align*}
			\hat V_1=&vec\left[
			\begin{array}{c}
				\hat\nabla^T \left[\bm{F}_1\right]\left(g^{-1}x\right)\\
				\hat\nabla^T \left[\bm{F}_2\right]\left(g^{-1}x\right)\\
				\cdots\\
				\hat\nabla^T\left [\bm{F}_K\right]\left(g^{-1}x\right)\\
			\end{array}
			\right]\\
			=&
			vec\left[u_{x_1}\ast\bm{F}\left(g^{-1}x\right),u_{x_2}\ast\bm{F}\left(g^{-1}x\right),u_{x_3}\ast\bm{F}\left(g^{-1}x\right)\right],\\
		\end{align*}
	\end{small}
	and
	\begin{scriptsize}
		\begin{align*}
			\hat{\bar{V}}_2 = &vec\left[
			\begin{array}{c}
				vec^T\left(\hat\nabla^2 \left[\bm{F}_1\right]\left(g^{-1}x\right)\right)\\
				vec^T\left(\hat\nabla^2 \left[\bm{F}_2\right]\left(g^{-1}x\right)\right)\\
				\cdots\\
				vec^T\left(\hat\nabla^2 \left[\bm{F}_K\right]\left(g^{-1}x\right)\right)\\
			\end{array}
			\right]\\
			=&vec
			\left[
			u_{x_1^2}\ast\bm{F}\left(g^{-1}x\right),u_{x_1x_2}\ast\bm{F}\left(g^{-1}x\right),u_{x_1x_3}\ast\bm{F}\left(g^{-1}x\right),\right.\\
			&\left.	u_{x_1x_2}\ast\bm{F}\left(g^{-1}x\right),u_{x_2^2}\ast\bm{F}\left(g^{-1}x\right),u_{x_2x_3}\ast\bm{F}\left(g^{-1}x\right),\right.\\
			&\left.u_{x_1x_3}\ast\bm{F}\left(g^{-1}x\right),u_{x_2x_3}\ast\bm{F}\left(g^{-1}x\right),u_{x_3^2}\ast\bm{F}\left(g^{-1}x\right)\right].
		\end{align*}
	\end{scriptsize}
	We further denote that
	\begin{small}
		\begin{align*}
			\hat V_2 =&vec
			\left[
			u_{x_1^2}\ast\bm{F}\left(g^{-1}x\right),u_{x_1x_2}\ast\bm{F}\left(g^{-1}x\right),u_{x_1x_3}\ast\bm{F}\left(g^{-1}x\right),\right.\\
			&\left.u_{x_2^2}\ast\bm{F}\left(g^{-1}x\right),u_{x_2x_3}\ast\bm{F}\left(g^{-1}x\right),u_{x_3^2}\ast\bm{F}\left(g^{-1}x\right)\right],
		\end{align*}
	\end{small}
	and we have 
	\begin{equation*}
		\hat{\bar{V}}_2 = \left(P^{\dag}\otimes I_K\right)\hat V_2,
	\end{equation*}
	As a result, the left hand size of Eqn. (\ref{dextend}) can be written as
	\begin{scriptsize}
		\begin{align*}
			&\rho'(g)\left(B_0\bm{F}\left(g^{-1}x\right)+B_1 \hat V_1+\bar{B}_2\hat{\bar{V}}_2\right)\\
			=&\rho'(g)\left(B_0\bm{F}\left(g^{-1}x\right)+B_1 \hat V_1+B_2\left(P\otimes I_K\right)\left(P^{\dag}\otimes I_K\right)\hat V_2\right)\\
			=&\rho'(g)\left(B_0\bm{F}\left(g^{-1}x\right)+B_1 \hat V_1+B_2\hat V_2\right).\\
		\end{align*}
	\end{scriptsize}
	Also, the right hand side of (\ref{dextend}) can be written as
	\begin{scriptsize}
		\begin{align*}
			&\left(A_0+A_1u_{x_1}+A_2u_{x_2}+A_3u_{x_3}+A_{11}u_{x_1^2}\right.\\
			&\left.+A_{12}u_{x_1x_2}+A_{13}u_{x_1x_3}+A_{22}u_{x_2^2}+A_{23}u_{x_2x_3}\right.\notag\\
			&\left.+A_{33}u_{x_3^2}\right)\ast\left[\rho(g)\bm{F}\left(g^{-1}x\right)\right]\\
			=	&\left(A_0\rho(g)+A_1\rho(g)u_{x_1}+A_2\rho(g)u_{x_2}+A_3\rho(g)u_{x_3}\right.\\
			&\left.+A_{11}\rho(g)u_{x_1^2}+A_{12}\rho(g)u_{x_1x_2}+A_{13}\rho(g)u_{x_1x_3}\right.\\
			&\left.+A_{22}\rho(g)u_{x_2^2}+A_{23}\rho(g)u_{x_2x_3}+A_{33}\rho(g)u_{x_3^2}\right)\ast\left[\bm{F}\left(g^{-1}x\right)\right]\\
			=&B_0\rho(g)\bm{F}\left(g^{-1}x\right)+B_1\left(I_3\otimes \rho(g)\right) vec\left[
			\begin{array}{c}
				\hat\nabla^T \left[\bm{F}_1\left(g^{-1}x\right)\right]\\
				\hat\nabla^T \left[\bm{F}_2\left(g^{-1}x\right)\right]\\
				\cdots\\
				\hat\nabla^T \left[\bm{F}_K\left(g^{-1}x\right)\right]\\
			\end{array}
			\right]\\
			&+\bar{B}_2\left(I_9\otimes\rho(g)\right)vec\left[
			\begin{array}{c}
				vec^T\left(\hat\nabla^2 \left[\bm{F}_1\left(g^{-1}x\right)\right]\right)\\
				vec^T\left(\hat\nabla^2 \left[\bm{F}_2\left(g^{-1}x\right)\right]\right)\\
				\cdots\\
				vec^T\left(\hat\nabla^2 \left[\bm{F}_K\left(g^{-1}x\right)\right]\right)\\
			\end{array}
			\right]\\
		\end{align*}
	\end{scriptsize}
	As for volumetric input $\bm{F}_i$ defined on regular 3D grids with $h_1=h_2=h_3$, it can be verified that $\forall g\in \mathcal{O}$,
	\begin{align*}
		\hat\nabla^T \left[\bm{F}_i\left(g^{-1}x\right)\right]=&	\hat\nabla^T \left[\bm{F}_i\right]\left(g^{-1}x\right)g^{-1},\\
		\hat\nabla^2 \left[\bm{F}_i\left(g^{-1}x\right)\right] =& g\hat\nabla^2 \left[\bm{F}_i\right]\left(g^{-1}x\right)g^{-1}.
	\end{align*}
	Then similar to that in the proof of Theorem \ref{thm}, the previous equation can be further written as
	\begin{scriptsize}
		\begin{align*}
			&B_0\rho(g)\bm{F}\left(g^{-1}x\right)+B_1\left(I_3\otimes \rho(g)\right) vec\left[
			\begin{array}{c}
				\hat\nabla^T \left[\bm{F}_1\right]\left(g^{-1}x\right)g^{-1}\\
				\hat\nabla^T \left[\bm{F}_2\right]\left(g^{-1}x\right)g^{-1}\\
				\cdots\\
				\hat\nabla^T \left[\bm{F}_K\right]\left(g^{-1}x\right)g^{-1}\\
			\end{array}
			\right]\\
			&+\bar{B}_2\left(I_9\otimes\rho(g)\right)vec\left[
			\begin{array}{c}
				vec^T\left(g\hat\nabla^2 \left[\bm{F}_1\right]\left(g^{-1}x\right)g^{-1}\right)\\
				vec^T\left(g\hat\nabla^2 \left[\bm{F}_2\right]\left(g^{-1}x\right)g^{-1}\right)\\
				\cdots\\
				vec^T\left(g\hat\nabla^2 \left[\bm{F}_K\right]\left(g^{-1}x\right)g^{-1}\right)\\
			\end{array}
			\right]\\
			=	&B_0\rho(g)\bm{F}\left(g^{-1}x\right)+B_1\left(g\otimes \rho(g)\right)\hat V_1\\
			&+B_2\left(P\left(g\otimes g\right)P^{\dag}\otimes\rho(g)\right)\hat V_2.\\
		\end{align*}
	\end{scriptsize}
	As a result, Eqn. (\ref{dequivariance}) is equivalent to that $\forall g\in \mathcal{O}$,
		\begin{align*}
			&\rho'(g)B_0\bm{F}\left(g^{-1}x\right)+\rho'(g)B_1 \hat V_1+\rho'(g)B_2\hat V_2\\
			=&B_0\rho(g)\bm{F}\left(g^{-1}x\right)+B_1\left(g\otimes \rho(g)\right)\hat V_1\\
			&+B_2\left(P\left(g\otimes g\right)P^{\dag}\otimes\rho(g)\right)\hat V_2.
		\end{align*}
	Since $\Psi$ is equivariant over $\mathcal{O}$, Eqn. (\ref{base}) is satisfied for any $g$ in $\mathcal{O}$. Finally, Eqn. (\ref{dequivariance}) is satisfied.
	\label{proof3}
\end{proof}

\subsection{Nonlinearities and Batch Normalization (BN)\label{appd2}}
\textbf{Scale-like Nonlinearities}\quad As irreducible representations (irreps) we employ are all unitary representations which preserve the norm of features, i.e., they satisfy that $\|\rho(g)f(x)\|_2=\|f(x)\|_2$. As a result, nonlinearities which solely act on the norm of feature vectors but preserve their orientation are admissible. They can in general be decomposed in $\sigma_{\text{norm}}:f(x)\mapsto \eta(\|f(x)\|_2)\frac{f(x)}{\|f(x)\|_2}$, where $\eta$ is a nonlinear function. For instance, Norm-ReLUs are defined by using $\eta(\|f(x)\|_2)=ReLU(\|f(x)\|_2-b)$ where $b$ is a learnable bias. Gated nonlinearity scales the norm of a feature field by learned sigmoid gates $\frac{1}{1+e^{-s(x)}}$, parameterized by  a scalar feature field $s$. In all, existing nonlinearities act by scaling the feature vectors, where the scales are acquired by different methods that do not disturb equivariance. We employ gated nonlinearity for implementation as many works have shown that it works in practice better than other options \citep{weiler20183d,weiler2019general}.

\noindent\textbf{Scale-BN}\quad Non-trivial irreducible feature fields are normalized with the average of their norms:
\begin{equation*}
	f(x)\mapsto f(x)\left(\frac{1}{|\mathcal{B}|V}\sum_{j\in\mathcal{B}}\int \|f(x)\|_2^2dx\right)^{-1/2},
\end{equation*}
where $\mathcal{B}$ is the batch size and $V$ is the size of domain. As a result, this kind of BN can only scale the feature vectors and cannot correct activations to zero-mean.

\subsection{Composition of Basic Feature Fields\label{appd3}}
We can easily employ one single basic feature field per layer, such as the regular, quotient, and irreducible feature field. Actually, we can put multiple basic feature fields together and acquire much more general feature fields, and the constraints for coefficients can be solved efficiently. Formally, we have the following theorem.

\begin{theorem}
	If $\rho(g)=\textcircled{+}_{i=1}^t \rho_i(g)$ and $\rho'(g)=\textcircled{+}_{i=1}^{s} \rho'_i(g)$, where $\rho_i(g)$ are $\rho'_i(g)$ are both basic group representations, and
	\begin{equation*}
		\textcircled{+}_{i=1}^t \rho_i(g)=\left[
		\begin{array}{ccc}
			\rho_1(g) & &  \\
			&  \ddots & \\
			& & \rho_{t}(g)
		\end{array}
		\right],
	\end{equation*}
	then Eqn. (\ref{base}) is equivalent to 
	\begin{equation*}
		\left\{
		\renewcommand{\arraystretch}{1.3}
		\begin{array}{l}
			\rho_i'(g)C_0^{ij}=C_0^{ij}\rho_j(g),\\
			\rho_i'(g)C_1^{ij}=C_1^{ij}\left(g\otimes \rho_j(g)\right),\\
			\rho_i'(g)C_2^{ij}=C_2^{ij}\left(P\left(g\otimes g\right)P^{\dag}\otimes \rho_j(g)\right),
		\end{array}
		\right.
		\label{element}
	\end{equation*}
	where
	\begin{equation}
		\left\{
		\renewcommand{\arraystretch}{1.3}
		\begin{array}{l}
			C^{ij}_0 = A^{ij}_0,\\
			C_1^{ij}=\left[A^{ij}_1,A^{ij}_2,A^{ij}_3\right],\\
			C_2^{ij}=\left[A^{ij}_{11},A^{ij}_{12},A^{ij}_{13},A^{ij}_{22},A^{ij}_{23},A^{ij}_{33}\right],
		\end{array}
		\right.
		\label{C}
	\end{equation}
	and $A^{ij}_k$ is the submatrix of $A_k$, i.e.,
	\begin{equation*}
		A_k=\left[
		\begin{array}{ccc}
			A_k^{11} & \cdots & A_k^{1t}\\
			\vdots & & \vdots\\
			A_k^{s1} & \cdots & A_k^{st}\\
		\end{array}
		\right].
		\label{A}
	\end{equation*}
\end{theorem}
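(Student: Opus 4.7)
The plan is to leverage the block-diagonal structure of $\rho(g) = \bigoplus_{j=1}^{t} \rho_j(g)$ and $\rho'(g) = \bigoplus_{i=1}^{s} \rho_i'(g)$, and then reduce each of the three constraints in (\ref{base}) blockwise. First I would partition every coefficient matrix $A_k$ consistently with these decompositions, so that $A_k^{ij}$ has $\dim(\rho_i')$ rows and $\dim(\rho_j)$ columns, and observe that $B_0, B_1, B_2$ inherit a block partition in the obvious way, whose $(i,j)$ blocks are exactly $C_0^{ij}, C_1^{ij}, C_2^{ij}$ given in (\ref{C}).

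For the zeroth-order constraint, since a block-diagonal matrix left-multiplies and right-multiplies block-by-block, I would simply write $(\rho'(g)B_0)^{ij} = \rho_i'(g)A_0^{ij}$ and $(B_0 \rho(g))^{ij} = A_0^{ij}\rho_j(g)$, and conclude that $\rho'(g)B_0 = B_0\rho(g)$ holds if and only if the blockwise identity $\rho_i'(g)C_0^{ij}=C_0^{ij}\rho_j(g)$ holds for every $(i,j)$.

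For the first-order constraint, the key step is to expand $B_1(g\otimes \rho(g))$ using the Kronecker block structure. Writing $g\otimes\rho(g)$ as a $3\times 3$ block matrix whose $(a,b)$ block equals $g_{ab}\rho(g) = \bigoplus_j g_{ab}\rho_j(g)$, and writing $B_1=[A_1,A_2,A_3]$ block-partitioned into $s$ row-blocks and $3t$ column-blocks indexed by $(l,j)$, a direct computation gives that the $(i,\,(l',j'))$ sub-block of $B_1(g\otimes\rho(g))$ is $\bigl(\sum_l g_{ll'} A_l^{ij'}\bigr)\rho_{j'}(g)$, which is exactly the $(i,\,(l',j'))$ sub-block one obtains by expanding $C_1^{ij'}(g\otimes\rho_{j'}(g))$. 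Matching these with the corresponding blocks of $\rho'(g)B_1$, namely $\rho_i'(g)A_{l'}^{ij'}$, then yields the blockwise equation for $C_1^{ij}$. The argument for the second-order constraint is identical, replacing $g\otimes g$ by the $6\times 6$ matrix $T(g)\coloneqq P(g\otimes g)P^{\dag}$, whose entries play the role of $g_{ab}$ above.

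The main obstacle is purely bookkeeping: keeping track of the index ordering in the Kronecker product so that the induced block structure is compatible with the partition of $B_1$ and $B_2$ used to define $C_1^{ij}$ and $C_2^{ij}$. Once the convention is pinned down, each of the three statements follows from the elementary fact that matrix multiplication is block-local in any factor that is block-diagonal, and no further structure of $P$, $P^{\dag}$, or of the representations $\rho_i,\rho_j'$ is needed. The converse direction (assembling the blockwise equations back into (\ref{base})) is immediate, since reconstituting the full matrices $B_0, B_1, B_2$ from their blocks and applying the same block-multiplication identities recovers the original constraint.
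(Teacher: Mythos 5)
Your proposal is correct and follows essentially the same route as the paper: both arguments exploit that multiplication by the block-diagonal $\rho(g)$ and $\rho'(g)$ acts block-locally, expand $g\otimes\rho(g)$ (and $P(g\otimes g)P^{\dag}\otimes\rho(g)$) into its $3\times 3$ (resp.\ $6\times 6$) block structure with entries $g_{ab}\rho_j(g)$, and match sub-blocks to recover the per-$(i,j)$ constraints on $C_0^{ij}$, $C_1^{ij}$, $C_2^{ij}$. The bookkeeping you flag is exactly what the paper's proof carries out explicitly, and your blockwise identities agree with theirs.
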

\begin{proof}
	The constraint for $B_0=A_0$ in (\ref{base}) can be written as
	\begin{small}
		\begin{align*}
			&\left[
			\begin{array}{ccc}
				\rho'_1(g) & &  \\
				&  \ddots & \\
				& & \rho'_{s}(g)
			\end{array}
			\right]
			\left[
			\begin{array}{ccc}
				A_0^{11} & \cdots & A_0^{1t} \\
				\vdots&   &  \vdots\\
				A_0^{s1}& \cdots& A_0^{st}
			\end{array}
			\right]\\
			=&
			\left[
			\begin{array}{ccc}
				A_0^{11} & \cdots & A_0^{1t} \\
				\vdots&   &  \vdots\\
				A_0^{s1}& \cdots& A_0^{st}
			\end{array}
			\right]
			\left[
			\begin{array}{ccc}
				\rho_1(g) & &  \\
				&  \ddots & \\
				& & \rho_{t}(g)
			\end{array}
			\right],
		\end{align*}
	\end{small}
	i.e., $\forall i=1,2,\cdots,s$ and $\forall j=1,2,\cdots,t,$
	\begin{equation*}
		\rho_i'(g)C_0^{ij}=C_0^{ij}\rho_j(g),
	\end{equation*}
	where $C_0^{ij}=A_0^{ij}$. As for the constraint for $B_1=\left[A_1,A_2,A_3\right]$, we have
	\begin{align*}
		\rho'(g)\left[A_1,A_2,A_3\right]=\left[A_1,A_2,A_3\right]\left(g\otimes \rho(g)\right),
	\end{align*}
	i.e.,
	\begin{align*}
		&\rho'(g)[A_1,A_2,A_3]\\
		=&[A_1,A_2,A_3]
		\left[
		\begin{array}{ccc}
			g_{11}\rho(g) & g_{12}\rho(g) & g_{13}\rho(g)\\
			g_{21}\rho(g) & g_{22}\rho(g) & g_{23}\rho(g)\\
			g_{31}\rho(g) & g_{32}\rho(g) & g_{33}\rho(g)\\
		\end{array}	
		\right],
	\end{align*}
	i.e.,
	\begin{align}
		&[\rho'(g)A_1,\rho'(g)A_2,\rho'(g)A_3]\notag\\
		=&\left[g_{11}A_1\rho(g) +g_{21}A_2\rho(g)+g_{31}A_3\rho(g),\right.\notag\\
		&\left.g_{12}A_1\rho(g)+g_{22}A_2\rho(g)+g_{32}A_3\rho(g),\right.\notag\\
		&\left.g_{13}A_1\rho(g)+g_{23}A_2\rho(g)+g_{33}A_3\rho(g)\right].
		\label{element}
	\end{align}
	
	Eqn. (\ref{element}) can be further expanded as
	\begin{align*}
		&\left[\left(\rho_i'(g)A_1^{ij}\right)_{ij},\left(\rho_i'(g)A_2^{ij}\right)_{ij},\left(\rho_i'(g)A_3^{ij}\right)_{ij}\right]\\
		=&\left[\left(g_{11}A_1^{ij}\rho_j(g) +g_{21}A_2^{ij}\rho_j(g)+g_{31}A_3^{ij}\rho_j(g)\right)_{ij},\right.\notag\\
		&\left.\left(g_{12}A_1^{ij}\rho_j(g)+g_{22}A_2^{ij}\rho_j(g)+g_{32}A_3^{ij}\rho_j(g)\right)_{ij},\right.\notag\\
		&\left.\left(g_{13}A_1^{ij}\rho_j(g)+g_{23}A_2^{ij}\rho_j(g)+g_{33}A_3^{ij}\rho_j(g)\right)_{ij}\right],
	\end{align*}
	i.e., $\forall i=1,2,\cdots,s$ and $\forall j=1,2,\cdots,t,$
	\begin{align*}
		&\left[\rho_i'(g)A_1^{ij},\rho_i'(g)A_2^{ij},\rho_i'(g)A_3^{ij}\right]\\
		=	&\left[g_{11}A_1^{ij}\rho_j(g) +g_{21}A_2^{ij}\rho_j(g)+g_{31}A_3^{ij}\rho_j(g),\right.\notag\\
		&\left.g_{12}A_1^{ij}\rho_j(g)+g_{22}A_2^{ij}\rho_j(g)+g_{32}A_3^{ij}\rho_j(g),\right.\notag\\
		&\left.g_{13}A_1^{ij}\rho_j(g)+g_{23}A_2^{ij}\rho_j(g)+g_{33}A_3^{ij}\rho_j(g)\right],
	\end{align*}
	and this can be reversely written as 
	\begin{equation*}
		\rho_i'(g)\left[A^{ij}_1,A^{ij}_2,A^{ij}_3\right]=\left[A^{ij}_1,A^{ij}_2,A^{ij}_3\right](g\otimes \rho_j(g)),
	\end{equation*}
	i.e.,
	\begin{equation*}
		\rho_i'(g)C_1^{ij}=C_1^{ij}(g\otimes \rho_j(g)),
	\end{equation*}
	where $C_1^{ij}=\left[A^{ij}_1,A^{ij}_2,A^{ij}_3\right]$.
	
	Analogously, the constraints for $B_2$ is equivalent to that 
	\begin{equation*}
		\rho_i'(g)C_2^{ij}=C_2^{ij}\left(P(g\otimes g)P^{\dag}\otimes \rho_j(g)\right),
	\end{equation*}
	where $C_2^{ij}=\left[A^{ij}_{11},A^{ij}_{12},A^{ij}_{13},A^{ij}_{22},A^{ij}_{23},A^{ij}_{33}\right]$.  
\end{proof}
In this way, we can firstly compute $C^{ij}_0,C^{ij}_1$ and $C^{ij}_2$ according to the basic group representations $\rho_i'(g)$ and $\rho_j(g)$, then $A_k^{ij}$ are obtained according to Eqn. (\ref{C}). Finally, the coefficients $A_k$ are obtained by placing $A_k^{ij}$ properly.

\section{Experimental Details\label{appe}}
We provide the experimental details in this section. Our experiments are implemented using Pytorch, and each one is run using one single Tesla-V100 GPU.
\subsection{3D Tetris (Testing 3D equivariance)\label{appe1}}
This dataset contains 8 shapes, and we convert each Tetris block into $40\times40\times40$ voxels for preprocessing. The $\mathcal{O}$-steerable models consist of $3$ convolutional layers. The first two layers are both composed of $10$ basic features, and the last layer is composed of $64$ scalar features. The architecture of the $SO(3)$-steerable model is in Table \ref{SO3_tetris}, which shows the numbers and sizes of feature fields. We use the average pooling for downsampling and a global average pooling after the final convolution layers to yield an invariant representation. Our models are trained using the Adam algorithm \cite{kingma2014adam}. We use the generalized He’s weight initialization scheme introduced in \cite{weiler2018learning} for the convolutional layers and the Xavier optimizer \cite{glorot2010understanding} for the fully-connected (FC) layers. We train for $200$ epochs with an initial learning rate of $0.01$ and an exponential decay of $0.98$ after $50$ epochs. 

For equivariance error analysis, we randomly sample $n=100$ rotations $g_i$ and shapes $\bm{I}_i$, and compute equivariance error $\frac{1}{n}\sum_{i=1}^n\|\Phi\left[\pi(g_i)[\bm{I}_i]\right]-\Phi[\bm{I}_i]\|_2/\|\Phi[\bm{I}_i]\|_2$, where $\Phi$ is a composition of $SO(3)$-steerable layers following by an invariant layer with trivial output features. Each intermediate feature fieds are $\textcircled{+}_{i=0}^2 D^l_i(g)$, and the models are tested after weights being initialized without training. The standard deviation $\sigma$ for Gaussian discretization is taken as half the radius of the kernel size. That is, $\sigma=0.5$ for the kernel size of $3\times 3\times 3$ ($k=3$), and $\sigma=1.0$ for $5\times 5\times 5$ ($k=5$). As shown in Figure \ref{error}, Gaussian discretization results in a much lower error for $SO(3)$-equivariance. 


\begin{table}[t]
	\caption{The architecture of the $SO(3)$-steerable model with irreducible features.}
	\centering
	\begin{tabular}{lcccc}
		\toprule
		&  $D^0(\rho)$ & $D^1(\rho)$  & $D^2(\rho)$ & Size \\
		\midrule
		Input  & 1 &0  & 0& $40^3$\\
		Layer1 & 4 & 4 & 4 & $40^3$ \\
		Layer2 & 16 & 16 & 16 & $20^3$ \\
		Layer3 & 32 & 16 & 16 & $10^3$ \\
		Layer4 & 128 &0 &0  & $10^3$ \\
		Output & 8 & 0 & 0 & $1$\\
		\bottomrule
	\end{tabular}
	\label{SO3_tetris}
\end{table}

\begin{figure}[t]
	\centering
	\includegraphics[scale=0.55]{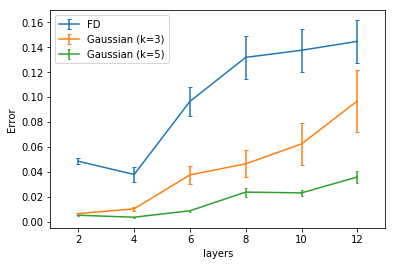}
	\caption{The $SO(3)$-equivariance errors from FD and Gaussian discretizations, respectively.}
	\label{error}
\end{figure}

\begin{table}[t]
	\caption{The results when we use different combinations of PDOs. $I,\nabla$ and $\nabla^2$ denote the identity operator, the first and the second order PDOs, respectively.}
	\centering
	\begin{tabular}{lc}
		\toprule
		Convolution kernel & Score \\
		\midrule
		$I+\nabla$ & 56.2 \\
		$I+\nabla^2$ ($\mathcal{V}$-quotient features)& 55.5 \\
		$I+\nabla^2$ (regular features)& 57.3 \\
		\hline 
		$I+\nabla+\nabla^2$&  \textbf{58.6}\\
		\bottomrule
	\end{tabular}
	\label{ablation}
\end{table}




\subsection{SHREC'17 Retrieval Challenge\label{appe2}}
We choose models and hyperparameters with the lowest validation error during training. The architectures of steerable models for the SHREC'17 retrieval task are shown in Table \ref{V_shrec}-\ref{S4_no_p2_shrec}. Our models are trained using the Adam optimizer \cite{kingma2014adam}. We train for $2000$ epochs with a batch size of $32$. The initial learning rate is set to $0.01$ and is divided by $10$ at $700$ and $1,400$ epochs. The filters from Gaussian discretization are of the size $5\times 5\times 5$, and $\sigma=1.0$.

\noindent\textbf{Ablation Study}\quad We conduct an ablation study to justify the choice of PDOs in Eqn. (\ref{Psi}). We choose several combinations of PDOs and report the final scores based on the $\mathcal{O}$-steerable models with regular features. For all settings, we use a model with about 0.15M parameters for evaluation on the SHREC'17 retrieval task.  Results for this ablation study are listed in Table \ref{ablation}. Our choice of PDO combinations in Eqn. (\ref{Psi}) achieves the best performance, indicating the necessity of employing all the PDOs up to the second-order. 

Interestingly, we observe that when we do not use the first-order PDOs, the model with regular features still outperforms that with quotient features (57.3 vs. 55.5). We argue that it is because that the regular features encode $24$ rotation symmetries using $24$ channels, while $\mathcal{V}$-quotient features only use 6, which may result in a low representation capacity.

\subsection{ISBI 2012 Challenge}
The full training image is $512\times 512\times 30$ voxels in shape. The setting is the same for test images. We take the first $25$ slices of the training image as our training set, and the last 5 as the validation set. For the training set, we extract random $128\times 128\times 5$ voxel patches from the training volume, and reflection pad 48 voxels in the $x$-$y$ plane. Following \cite{quan2016fusionnet}, we apply a random elastic distortion and rotation in the $x$-$y$ plane and add Gaussian noise ($\sigma=0.1$) for data augmentation. Finally, following CubeNet \cite{worrall2018cubenet}, we pass these patches through an equivariant FusionNet \cite{quan2016fusionnet}. We choose hyperparameters with the highest validation score during training. Our models are trained using the Adam optimizer \cite{kingma2014adam}. We train for $4,000$ iterations with a batch size of $4$. We use an initial learning rate of $0.001$ and an exponential decay of $0.99$.

\begin{table}[t]
	\caption{The architecture of the $\mathcal{V}$-steerable model with regular features.}
	\centering
	\begin{tabular}{lccc}
		\toprule
		&  Scalar & Regular & Size \\
		\midrule
		Input  & 1 & 0& $64^3$\\
		Layer1 & 0& 8 & $32^3$ \\
		Layer2 & 0& 8 & $32^3$ \\
		Layer3 & 0& 12 & $16^3$ \\
		Layer4 & 0& 12 & $16^3$ \\
		Layer5 & 0& 16  & $8^3$ \\
		Layer6 & 0& 16 & $8^3$ \\
		Layer7 & 0& 16 & $8^3$ \\
		Layer8 & 512 &0   & $8^3$ \\
		Output & 55 & 0 & $1$ \\
		\bottomrule
	\end{tabular}
	\label{V_shrec}
\end{table}

\begin{table}[t]
	\caption{The architecture of the $\mathcal{T}$-steerable model with regular features.}
	\centering
	\begin{tabular}{lccc}
		\toprule
		&  Scalar & Regular & Size \\
		\midrule
		Input  & 1 & 0& $64^3$\\
		Layer1 & 0& 4 & $32^3$ \\
		Layer2 & 0& 4 & $32^3$ \\
		Layer3 & 0& 8 & $16^3$ \\
		Layer4 & 0& 8 & $16^3$ \\
		Layer5 & 0& 12  & $8^3$ \\
		Layer6 & 0& 12 & $8^3$ \\
		Layer7 & 0& 12 & $8^3$ \\
		Layer8 & 512 &0   & $8^3$ \\
		Output & 55 & 0 & $1$ \\
		\bottomrule
	\end{tabular}
	\label{A4_shrec}
\end{table}

\begin{table}[t]
	\caption{The architecture of the $\mathcal{O}$-steerable model with regular features.}
	\centering
	\begin{tabular}{lccc}
		\toprule
		&  Scalar & Regular & Size \\
		\midrule
		Input  & 1 & 0& $64^3$\\
		Layer1 & 0& 4 & $32^3$ \\
		Layer2 & 0& 4 & $32^3$ \\
		Layer3 & 0& 6 & $16^3$ \\
		Layer4 & 0& 6 & $16^3$ \\
		Layer5 & 0& 8  & $8^3$ \\
		Layer6 & 0& 10 & $8^3$ \\
		Layer7 & 0& 10 & $8^3$ \\
		Layer8 & 512 &0   & $8^3$ \\
		Output & 55 & 0 & $1$ \\
		\bottomrule
	\end{tabular}
	\label{S4_shrec}
\end{table}

\begin{table}[t]
	\caption{The architecture of the $\mathcal{I}$-steerable model with regular features.}
	\centering
	\begin{tabular}{lccc}
		\toprule
		&  Scalar & Regular & Size \\
		\midrule
		Input  & 1 & 0& $64^3$\\
		Layer1 & 0& 3 & $32^3$ \\
		Layer2 & 0& 3 & $32^3$ \\
		Layer3 & 0& 5 & $16^3$ \\
		Layer4 & 0& 5 & $16^3$ \\
		Layer5 & 0& 6  & $8^3$ \\
		Layer6 & 0& 6 & $8^3$ \\
		Layer7 & 0& 6 & $8^3$ \\
		Layer8 & 512 &0   & $8^3$ \\
		Output & 55 & 0 & $1$ \\
		\bottomrule
	\end{tabular}
	\label{A5_shrec}
\end{table}

\begin{table}[t]
	\caption{The architecture of the $SO(3)$-steerable model with irreducible features.}
	\centering
	\begin{tabular}{lcccc}
		\toprule
		&  $D^0(\rho)$ & $D^1(\rho)$  & $D^2(\rho)$ & Size \\
		\midrule
		Input  & 1 &0  & 0& $64^3$\\
		Layer1 & 8 & 4 & 2 & $32^3$ \\
		Layer2 & 8 & 4 & 2 & $32^3$ \\
		Layer3 & 16 & 8 & 4 & $16^3$ \\
		Layer4 & 16 & 8 & 4 & $16^3$ \\
		Layer5 & 32 & 16 & 8 & $8^3$ \\
		Layer6 & 32 & 16 & 8 & $8^3$ \\
		Layer7 & 64 & 32 & 16 & $8^3$ \\
		Layer8 & 512 &0 &0  & $8^3$ \\
		Output & 55 & 0 & 0&$1$ \\
		\bottomrule
	\end{tabular}
	\label{SO3_shrec}
\end{table}

\begin{table}[t]
	\caption{The architecture of the $\mathcal{O}$-steerable model with $\mathcal{V}$-quotient features.}
	\centering
	\begin{tabular}{lccc}
		\toprule
		&  Scalar & $\mathcal{V}$-quotient & Size \\
		\midrule
		Input  & 1 & 0& $64^3$\\
		Layer1 & 0& 8 & $32^3$ \\
		Layer2 & 0& 8 & $32^3$ \\
		Layer3 & 0& 16 & $16^3$ \\
		Layer4 & 0& 16 & $16^3$ \\
		Layer5 & 0& 28  & $8^3$ \\
		Layer6 & 0& 28 & $8^3$ \\
		Layer7 & 0& 30 & $8^3$ \\
		Layer8 & 512 &0   & $8^3$ \\
		Output & 55 & 0 & $1$ \\
		\bottomrule
	\end{tabular}
	\label{S4_quotient_K4_shrec}
\end{table}

\begin{table}[t]
	\caption{The architecture of the $\mathcal{O}$-steerable model with the features composed of regular and $\mathcal{V}$-quotient features.}
	\centering
	\begin{tabular}{lccc}
		\toprule
		&  Scalar & $\mathcal{V}$-quotient & Size \\
		\midrule
		Input  & 1 & 0& $64^3$\\
		Layer1 & 0& 8 & $32^3$ \\
		Layer2 & 0& 8 & $32^3$ \\
		Layer3 & 0& 16 & $16^3$ \\
		Layer4 & 0& 16 & $16^3$ \\
		Layer5 & 0& 28  & $8^3$ \\
		Layer6 & 0& 28 & $8^3$ \\
		Layer7 & 0& 30 & $8^3$ \\
		Layer8 & 512 &0   & $8^3$ \\
		Output & 55 & 0 & $1$ \\
		\bottomrule
	\end{tabular}
	\label{S4_quotient_K4_shrec}
\end{table}

\begin{table}[t]
	\caption{The architecture of the $\mathcal{O}$-steerable model using regular features without the first-order PDOs.}
	\centering
	\begin{tabular}{lccc}
		\toprule
		&  Scalar & Regular & Size \\
		\midrule
		Input  & 1 & 0& $64^3$\\
		Layer1 & 0& 4 & $32^3$ \\
		Layer2 & 0& 6 & $32^3$ \\
		Layer3 & 0& 8 & $16^3$ \\
		Layer4 & 0& 8 & $16^3$ \\
		Layer5 & 0& 10  & $8^3$ \\
		Layer6 & 0& 12 & $8^3$ \\
		Layer7 & 0& 12 & $8^3$ \\
		Layer8 & 512 &0   & $8^3$ \\
		Output & 55& 0 & $1$ \\
		\bottomrule
	\end{tabular}
	\label{S4_no_p1_shrec}
\end{table}

\begin{table}[t]
	\caption{The architecture of the $\mathcal{O}$-steerable model using regular features without the second-order PDOs.}
	\centering
	\begin{tabular}{lccc}
		\toprule
		&  Scalar & Regular & Size \\
		\midrule
		Input  & 1 & 0& $64^3$\\
		Layer1 & 0& 6 & $32^3$ \\
		Layer2 & 0& 6 & $32^3$ \\
		Layer3 & 0& 10 & $16^3$ \\
		Layer4 & 0& 10 & $16^3$ \\
		Layer5 & 0& 16  & $8^3$ \\
		Layer6 & 0& 16 & $8^3$ \\
		Layer7 & 0& 18 & $8^3$ \\
		Layer8 & 512 &0   & $8^3$ \\
		Output & 55& 0 & $1$ \\
		\bottomrule
	\end{tabular}
	\label{S4_no_p2_shrec}
\end{table}

%

\end{document}